\documentclass[twoside,11pt]{article}

% Any additional packages needed should be included after jmlr2e.
% Note that jmlr2e.sty includes epsfig, amssymb, natbib and graphicx,
% and defines many common macros, such as 'proof' and 'example'.
%
% It also sets the bibliographystyle to plainnat; for more information on
% natbib citation styles, see the natbib documentation, a copy of which
% is archived at http://www.jmlr.org/format/natbib.pdf

% Available options for package jmlr2e are:
%
%   - abbrvbib : use abbrvnat for the bibliography style
%   - nohyperref : do not load the hyperref package
%   - preprint : remove JMLR specific information from the template,
%         useful for example for posting to preprint servers.
%
% Example of using the package with custom options:
%
 \usepackage[abbrvbib, preprint]{jmlr2e}

%\usepackage{jmlr2e}

% Packages by Author
\usepackage{tikz} % nice language for creating drawings and diagrams
\usepackage{natbib} % has a nice set of citation styles and commands
\bibliographystyle{plainnat}

\usepackage{mathtools} % amsmath with fixes and additions
\usepackage{booktabs} % commands to create good-looking tables

\usepackage{amsfonts}
\usepackage{siunitx} 
\usepackage{afterpage}
\usepackage{appendix}
\usepackage{amsmath}
\usepackage{mathtools}
\usepackage[ruled,vlined]{algorithm2e}
\usepackage{subcaption}

\newcommand\independent{\protect\mathpalette{\protect\independenT}{\perp}}
\def\independenT#1#2{\mathrel{\rlap{$#1#2$}\mkern2mu{#1#2}}}

\usepackage[noabbrev,capitalize]{cleveref}

% Definitions of handy macros can go here

%\renewcommand*{\thefootnote}{\fnsymbol{footnote}}
\usepackage{ifthen}

% end by author

% Definitions of handy macros can go here

%\newcommand{\dataset}{{\cal D}}
%\newcommand{\fracpartial}[2]{\frac{\partial #1}{\partial  #2}}

% Heading arguments are {volume}{year}{pages}{date submitted}{date published}{paper id}{author-full-names}

%\jmlrheading{1}{2000}{1-48}{4/00}{10/00}{meila00a}{Fritz Bayer et al.}

% Short headings should be running head and authors last names

\ShortHeadings{High-Dimensional Inference in Bayesian Networks}{Bayer et al.}
\firstpageno{1}

\begin{document}

%\title{Marginalization in Bayesian Networks: \\ Integrating Exact and Approximate Inference}
\title{High-Dimensional Inference in Bayesian Networks}

\author{\name Fritz M. Bayer \email frbayer@ethz.ch \\
       	\addr ETH Zurich
       	\AND
       	\name Giusi Moffa \\
       	\addr University of Basel\\
       	\name Niko Beerenwinkel \\
       	\addr ETH Zurich\\
	   	\name Jack Kuipers \email jack.kuipers@bsse.ethz.ch \\
   		\addr ETH Zurich\\
}

\editor{}

\maketitle

\begin{abstract}%   <- trailing '%' for backward compatibility of .sty file
Inference of the marginal probability distribution is defined as the calculation of the probability of a subset of the variables and is relevant for handling missing data and hidden variables.
%	The marginal probability distribution is defined as the probability of a subset of the variables and is important for missing data and hidden variable applications.
%	The marginal probability distribution is defined as the probability of a subset of the variables and is important when dealing with missing data and hidden variables.
%Bayesian Networks are probabilistic graphical models that can compactly represent dependencies among random variables. Missing data and hidden variables require calculating the marginal probability distribution of a subset of the variables.
%If a set of variables is incomplete and therefore represents a fraction of the total variables, their probability distribution is given by the normalizing constant.
%The probability distribution of a fraction of the variables is denoted the normalizing constant.
While inference of the marginal probability distribution is crucial for various problems in machine learning and statistics, its exact computation is generally not feasible for categorical variables in Bayesian networks due to the NP-hardness of this task. %; various approximate inference techniques can be used instead. 
We develop a divide-and-conquer approach using the graphical properties of Bayesian networks to split the computation of the marginal probability distribution into sub-calculations of lower dimensionality, thus reducing the overall computational complexity. Exploiting this property, we present an efficient and scalable algorithm for calculating the marginal probability distribution for categorical variables. The novel method is compared against state-of-the-art approximate inference methods in a benchmarking study, where it displays superior performance. As an immediate application, we demonstrate how our method can be used to classify incomplete data against Bayesian networks and use this approach for identifying the cancer subtype of kidney cancer patient samples. %\footnote{An R package of our method and reproducible benchmarks along with the kidney cancer analysis are available at \url{https://anonymous.4open.science/r/sgs}}
\footnote{Code implementing our method and reproducible benchmarks are open-source and publicly available at \url{https://github.com/cbg-ethz/SubGroupSeparation}.}
\end{abstract}

\begin{keywords}
  Bayesian Networks, Probabilistic Graphical Models, Inference, Missing Data, Marginalization
\end{keywords}

\section{Introduction}\label{sec:intro}

%Bayesian Networks are probabilistic graphical models that can compactly represent dependencies among random variables. 
Bayesian networks are a valuable tool for modelling the underlying dependencies between random variables \citep{pearl1995} and for discovering causal relations \citep{spirtes2000,maathuis2009}. By visualizing the probabilistic relationships in a compact graphical form, Bayesian networks are widely employed across various disciplines, such as psychology \citep{van2017, moffa2017}, medicine \citep{mclachlan2020}
%social sciences \citep{morgan2013}, engineering \citep{cai2020} 
and biology \citep{friedman2000,friedman2004}. As Bayesian networks are the most popular causal models and learning causal relations, rather than correlations, has increasingly gained popularity in machine learning and statistics in recent years \citep{scholkopf2019causality, luo2020causal}, various methods have been proposed to learn their structure \citep{heinze2018causal, constantinou2021, rios2021}. While recent advances permit Bayesian network sampling and structure learning to perform well for larger sizes \citep{kuipers2022efficient}, the task of inference of the marginal probability distribution of any observed subset of variables remains a fundamental problem in high dimensions for categorical variables. This task is essential in many statistical and scientific studies \citep{gelman1998}, in particular when dealing with missing data or hidden variables.% , which is a common issue in biological data analysis. % In other cases, the quantity of interest describes explicitly a marginal probability distribution.

While inference of the marginal probability distribution is trivial for low dimensions, in general, the problem was proven to be NP-hard for categorical variables \citep{cooper1990}. That makes an exact calculation computationally unfeasible for the high-dimensional data that Bayesian networks are increasingly employed on \citep{kalisch2007, buhlmann2014}. Consequently, efforts are focused on finding an approximation of the marginal probability distribution for which two main branches of methods can be used: variational inference and importance sampling \citep{koller2009, murphy2012}. Variational inference allows for a fast calculation for categorical variables that is exact in Bayesian networks whose underlying directed acyclic graph (DAG) is a tree \citep{pearl1988} and otherwise approximate. However, the estimation is biased and not guaranteed to converge to the true value \citep{murphy1999}. In contrast, importance sampling techniques are unbiased, though they can take a long time to reach acceptable accuracy \citep{koller2009}. Since missing data is commonly handled by imputing the missing variables using the EM-algorithm in Bayesian networks \citep{scanagatta2018efficient, scutari2020bayesian, ruggieri2020hard}, previous methods have focused on inferring the probability distribution of a single variable given the evidence. In this work, we solve the more general problem of inferring the marginal probability distribution of multiple variables. 

We show that by exploiting the graphical properties of Bayesian networks, the inference of the marginal probability distribution can be split into sub-calculations of lower dimensionality. Exploiting this property by applying the optimal inference method on each sub-problem, we present an efficient algorithm for calculating the marginal probability distribution. 
Experimentally and theoretically, we answer the following questions in the positive:
% Contributions, we experimentally answer the following questions:
\begin{itemize}
	\item Is the proposed method applicable for high-dimensional Bayesian networks?
	%	\item How does the proposed method compare to state-of-the-art approximate inference methods?	
	%	\item How does the proposed method compare to state-of-the-art approximate inference methods?	
	\item Does it perform favourably compared to state-of-the-art approximate inference methods?
	\item Is the proposed method unbiased (i.e.\ does it converge to the true value)? 
	%    \item Does knowledge of the normalizing constant allow for accurate imputation if missing data in Bayesian networks classification? \textit{Yes, see Figure~\ref{fig:BarPlotMain}.}
	% Does knowledge of the normalizing constant allow to accurately identify cancer subtypes based on their mutational profiles? \textit{Yes, see Figure~\ref{fig:BarPlotMain}.}
\end{itemize} 
Finally, as an application, we demonstrate how incomplete data can be classified against Bayesian network models by identifying the cancer subtype of kidney cancer patient samples.

% Given a collection of random variables $X^n$ with $v \in V$, they depict the importance relationships by conditional independence statements.

%\begin{figure*}%[h!]
%	\centering
%	\includegraphics[scale=0.47, page=2]{figures/Inference_BN2.pdf}
%	\caption{An exemplary DAG with evidence variables $e$ marked in grey (a) and the respective relevant subgraph (b). The relevant subgraph is split by the evidence into two conditionally independant subsets.}
%	\label{fig:inferenceMethods}
%\end{figure*}

\section{Preliminaries}

\subsection{Bayesian network notation}

A directed graph $\mathcal{G}=(V, E)$ consists of a set of nodes $V$ and directed edges $E \subseteq \{ ( i,j )  : i,j \in V , i \neq j \} $. It is said to be acyclic if there is no directed path that revisits the same node. The set of parent nodes for $i \in V$ is defined as ${pa(i) := \{j \in V : (j, i) \in E\} }$, which is the set of nodes that have a directed edge pointing into node $i$. 

% Ialics version:
% A \textit{directed graph} $\mathcal{G}=(V, E)$ consists of a set of nodes $V$ and directed edges $E \subseteq \{ ( i,j )  : i,j \in V , i \neq j \} $. It is said to be \textit{acyclic} if there is no directed path that revisits the same node. The set of \textit{parent nodes} for $i \in V$ is defined as ${pa(i) := \{j \in V : (j, i) \in E\} }$, i.e.\ as the set of nodes that have a directed edge pointing into node $i$. A \textit{Bayesian network} $(\mathcal{G}, P)$ is constructed by a directed acyclic graph $\mathcal{G}$ and associated local probability distributions $P=(P_i)_{i\in V}$, specifying the conditional probability of each variable $X_i$ given its parents, $P_i=P(X_i \mid X_{pa(i)})$. In the case of categorical variables, $P$ may just be a conditional probability table (CPT). As a necessary condition for Bayesian networks, the DAG and its corresponding probability distribution $P$ need to satisfy the \textit{Markov properties} \citep[section 2.2.4]{korb2010}, allowing for the factorization

A Bayesian network $(\mathcal{G}, P)$ comprises a directed acyclic graph $\mathcal{G}$ and associated local probability distributions $P=(P_i)_{i\in V}$, specifying the conditional probability distribution of each variable $X_i$ given its parents, $P_i=P\left(X_i \mid X_{pa(i)}\right)$. In the case of categorical variables, $P$ may simply be a conditional probability table (CPT). As a necessary condition for Bayesian networks, the DAG and its corresponding probability distribution $P$ need to satisfy the Markov properties \citep[section 2.2.4]{korb2010}. This enables the factorization of the probability of a complete observation
% Given a collection of random variables $X^n$ with $v \in V$, a Bayesian network depicts the importance relationships by conditional independence statements. The conditional independencies of the variables $X^n$ are captured by a directed acyclic graph (DAG), consisting of nodes $V$ that are connected by edges $E$. A Bayesian network $B = (G, P)$ is then constructed by a DAG $G=(V, E)$ and an associated probability distribution $P$ that captures the conditional distributions of each variable given its parents $P=P(X_i \mid X_{pa(i)})$. As a necessary condition for Bayesian networks, the DAG and corresponding probability distribution $P$ need to satisfy the Markov properties (described for example in \cite{maathuis2018}), which allows for the factorization
\begin{equation}
P(X_V)=\prod_{i \in V} P\big(X_i \mid X_{pa(i)}\big)
\end{equation}
where $P(X_V)= P(\{X_i\}_{i \in V})$. The marginal probability distribution of any subset of nodes $e\subseteq V$ can then be calculated by summing over the remaining variables ${V'=V\setminus e}$
\begin{equation}\label{equ:normConstDef}
P(X_e)=\sum_{X_{V'}} P(X_{V'},X_e)= \sum_{X_{V'}} \prod_{i\in V} P\big(X_i \mid X_{pa(i)}\big)
\end{equation}
where the subset of variables $X_e$ could, for example, represent a set of observed variables and will subsequently be referred to as the \textit{evidence variables}. By integrating out the unobserved variables, the marginal probability distribution $P(X_e)$ acts as a normalizing constant for the remaining variables ${X_{V'}}$
\begin{equation}
P(X_{V'}|X_e)=\frac{P(X_{V'},X_e)}{P(X_e)}
\end{equation}
and is frequently referred to as a normalising constant. Note that the marginal probability distribution marks the general case of the popular problem of obtaining the probability distribution of a single variable $P(X_i|X_e),\; i \in V$, frequently referred to as inference \citep{koller2009, murphy2012}. Implications of this generalization will be discussed in the following section.

\subsection{Marginalization and Message Passing}\label{sec:MessagePassing}

Message passing consists of a large family of algorithms that originate from the belief propagation algorithm as proposed by \cite{pearl1988}. For categorical variables, the most prominent ones include standard belief propagation for trees, loopy belief propagation and the junction tree algorithm \citep{koller2009}. The classical inference problem of calculating the probability distribution of a single variable given the evidence ${P(X_i \mid X_e)}$ can be efficiently solved by using belief propagation for trees; if the DAG is not a tree, an approximate solution can be found using loopy belief propagation \citep{murphy1999}. It is straightforward to obtain $P(X_i \mid X_e)$ from $P(X_i \mid X_{pa(i)},X_e)$ by summing over the parents. However, the inference of the marginal probability distribution requires calculation of the probability of multiple variables given the evidence $P(X_{V'}|X_e)$. This cannot generally be obtained from $P(X_i \mid X_{pa(i)},X_e)$ since
%\begin{equation}
%	P(X_1,...,X_n \mid e)=\prod_{i=1}^n P(X_i \mid e, X_{pa(i)}) \text{ , if DAG is a tree}
%\end{equation}
%
%However if $\exists  i,j \in pa(k), k \in e:  $
\begin{equation}
P(X_{V'} \mid X_e) \neq \prod_{i \in V'} P(X_i \mid X_{pa(i)}, X_e)
\end{equation}
under the condition that any pair of parents of the evidence variables $X_e$ are conditionally independent, i.e.\ ${X_i \independent  X_j}$ for any $i,j \in \{pa(k): k \in e \}$,  $i\neq j$. Thus, message passing for approximate inference is not ideal for inferring the marginal probability distribution, though it can be taken as a rough approximation, giving a good importance function for importance sampling \citep{yuan2006}. 

In contrast, the junction-tree algorithm yields the exact $P(X_{V'} \mid X_e)$ by combining multiple nodes into cliques and performing belief propagation on the cliques instead of on the individual nodes. With that, the junction-tree algorithm is efficient in inferring the marginal probability distribution in low-dimensional networks, but struggles with high-dimensional and dense networks as the computational complexity is exponential in the size of the largest clique. 

\subsection{Marginalization and Sampling}\label{sec:sampling}

Importance sampling is commonly applied when a target distribution $P'$ is intractable. By sampling from a tractable distribution $P_s$ that is cheap to compute, inference can be made about the target distribution. The challenge in importance sampling lies in the choice of the sampling distribution, which should be close to the target distribution and cheap to compute at the same time. 
For Gibbs sampling, the distribution $P_s$ would be the probability distribution of each variable given its Markov Blanket. 

Using importance weighting, the marginal probability distribution can computed from the samples from $P_s$ as follows
%
%In order to compute the marginal probability distribution from the samples generated from $P_s$, importance weighting can be used as follows
%
\begin{equation}
%\begin{split}
P(X_e)=\sum_{X_{V'}} P(X_{V'},X_e) = \sum_{X_{V'}} P_s(X_{V'}) \frac{P(X_{V'}, X_e)}{P_s(X_{V'})} 
= \mathbb{E}_{P_s(X_{V'})}\bigg[ \frac{P(X_{V'}, X_e)}{P_s(X_{V'})} \bigg]
%\end{split}
\end{equation}
As in message passing, the problem of marginalization in importance sampling is different from the problem of calculating the probability distribution of a single variable given the evidence ${P(X_i \mid X_e)}$. Marginalization requires a good sampling distribution of all variables not in the evidence, whereas the calculation of ${P(X_i \mid X_e)}$ requires only a good sampling distribution of the respective variable $X_i$.

\subsection{Related Work}

While variational inference and importance sampling are usually the only option in high-dimensional settings, they can be rather slow or inaccurate in comparison to exact methods in lower dimensions. Thus, the ideal marginalization method largely depends on the dimensionality of the problem. %, which depends both on the DAG structure and the number of observed variables. 
Previous efforts in inferring the probability distribution of a single variable have focussed on finding hybrid methods that combine elements of importance sampling with exact or variational inference methods \citep{yuan2006, gogate2008, gogate2010, venugopal2013, li2017}. %Using variational inference to calculate the functions used in the sampling process has shown to increase the accuracy of importance sampling in Bayesian networks \citep{yuan2006} as well as in other domains such as deep neural networks \citep{li2017}. However, such approaches are relatively time-consuming when applied to small networks, with respect to exact inference which returns fast and exact results.
% Another
A prominent approach, known as collapsed particles \citep{koller2009}, Rao-Blackwellisation \citep{casella1996a} or cutset sampling \citep{bidyuk2006,bidyuk2007}, is to sample only a fraction of the variables, while performing exact inference on the others. While this approach improves the performance in some settings, it has two major drawbacks: Finding the ideal subset of variables from which to sample is an NP-hard problem in cutset sampling \citep{bidyuk2007}; and the computational complexity of the sapling process grows exponentially with each variable that is calculated by means of exact inference. In the following section, we show how both of these drawbacks can be overcome by introducing a novel divide-and-conquer formalism.

\cite{madsen1999} proposed a decomposition based on d-separation, reducing the computational complexity of the junction-tree algorithm. However their framework solves a more general problem and in the special case of marginalization, it only reduces the amount of passed messages in the junction-tree algorithm. %Since their framework is based on the junction-tree algorithm, a generalization to other exact or approximate inference schemes is not possible. 

% In high-dimensional settings, Markov chain importance (MCMC) methods, such as Gibbs sampling, have shown to be an efficient method. The sampling distribution of Gibbs sampling is given by the probability of the Markov blanket of each node $P_s(X_i)=P(X_i \mid X_{\text{mb(i)}}), \forall i \in V$. While 

% The samples alone are not sufficient for estimating the normalizing constant, though importance weighting can be used for the estimation

%\begin{itemize}
%	\item importance estimates
%	\item Sampling distribution
%	\item Gibbs sampling as prominent discrete variable option
%	\item Even though Gibbs sampling can produce samples from a distribution, it does not tell the marginal prob.
%	\item Importance sampling for estimating the norm constant 
%\end{itemize}

\section{Subgroup Separation Scheme}

%In this section, we show that the number of sampled variables in importance sampling can be reduced by exploiting conditional independence amongst subgroups of variables. Further, we show how exploiting the graphical properties of the underlying DAG of the Bayesian network can be exploited.

In this section, we show how the graphical properties of a Bayesian network can be exploited to reduce computational complexity of marginalization. Firstly, we show which nodes do not contribute to the marginal probability distribution and exclude them from further calculations. Secondly, by exploiting the concept of d-separation \citep{pearl1988}, we show how the variables in the graph can be split into subsets that are independent of each other in the marginalization. Thirdly, we develop a divide-and-conquer formalism that allows inferring each subset of variables independently; the reduced dimension of the subsets allows us to efficiently apply exact inference on parts of the calculation. % Splitting the graph into subsets is computationally cheap and has a linar complexity (Algorithm~\ref{algo_disjdecomp}).

\subsection{Reduction of Sampled Variables} \label{sec:reducedNet}

All variables that do not contribute to the marginal probability distribution should be neglected in further considerations. For this purpose, we define an \textit{irrelevant node} as a node $i$ whose descendants $de(i)$ do not contain the evidence nodes. Expanding on this definition, we introduce the \textit{relevant subgraph} as a DAG without the irrelevant nodes, similar to \citep{lin1997}.

\begin{definition}[Irrelevant Node]\label{def:superf}
	A node $i\in V$ in a DAG $\mathcal{G}=(V,E)$ over $X_V$ is irrelevant w.r.t.\ a set of nodes $e$ if $\Big(\{i\} \cup de(i) \Big) \cap e = \emptyset$.
\end{definition}

\begin{definition}[Relevant Subgraph]
	The relevant subgraph $\mathcal{G}'$ of a DAG $\mathcal{G}$ w.r.t.\ a set of nodes $e$ is the remaining graph after removal of all irrelevant nodes and their edges. 
\end{definition}

\noindent
Figure~\ref{fig:example} shows a DAG with evidence nodes marked in grey (a) and its corresponding relevant subgraph (b). As shown in Proposition~\ref{prop:relSub}, it is sufficient to consider the relevant subgraph $\mathcal{G}'$ for marginalization, since $P_{\mathcal{G}'}(X_e)=P_{\mathcal{G}}(X_e)$.

\begin{figure*}%[h!]
	\centering
	\includegraphics[trim={0 2.5cm 0 3cm},clip,scale=0.38, page=2]{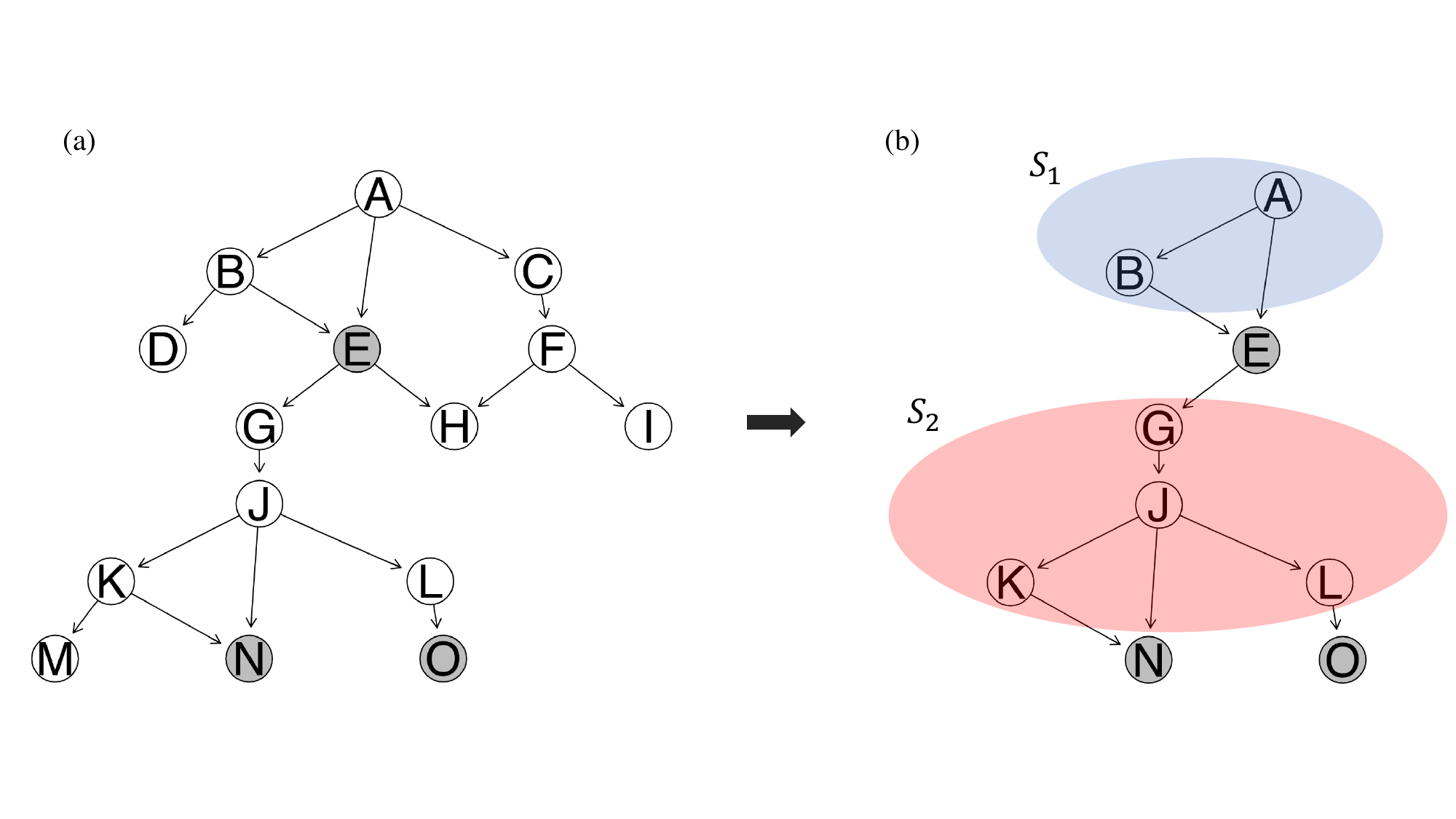}
	\caption{An example DAG with evidence nodes $e$ marked in grey (a) and the respective relevant subgraph (b). The relevant subgraph is split by the evidence into two conditionally independant subsets.}
	\label{fig:example}
\end{figure*}

\begin{proposition}[Marginalization over Relevant Subgraph] \label{prop:relSub}
	Let $\mathcal{G}'$ be the relevant subgraph of a DAG $\mathcal{G}$ w.r.t.\ a set of nodes $e$ and let $P_{\mathcal{G}'}$ and $P_{\mathcal{G}}$ be the respective probability distributions that satisfy the Markov properties. Then $P_{\mathcal{G}'}(X_e)=P_{\mathcal{G}}(X_e)$.
\end{proposition}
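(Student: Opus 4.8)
The plan is to split the node set $V$ into the relevant nodes $R$ (those retained in $\mathcal{G}'$) and the irrelevant nodes $I = V \setminus R$, and to show that marginalizing the full factorization over all of $X_{V\setminus e}$ first collapses the irrelevant part to a factor of $1$ and then reduces exactly to the marginal computed on $\mathcal{G}'$. First I would record the immediate fact that $e \subseteq R$, since every evidence node is a descendant of itself and hence never irrelevant by \cref{def:superf}.

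The first key step is a structural observation about the closure of $R$ and $I$. I would show that if $i \in R$ then every parent $j \in pa(i)$ is also relevant: because $i$ is relevant, some node in $\{i\} \cup de(i)$ lies in $e$, and that same node also lies in $\{j\} \cup de(j)$, so $j$ cannot be irrelevant. Dually, this says that if $i \in I$ then every child of $i$ is irrelevant, i.e.\ $I$ is closed under taking descendants and no edge points from $I$ into $R$. Consequently, for each $i \in R$ the parent set is unchanged by the removal of irrelevant nodes, $pa_{\mathcal{G}'}(i) = pa_{\mathcal{G}}(i) \subseteq R$, so the local factors attached to the relevant nodes are identical in the two networks.

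With this in hand I would factorize and marginalize. Writing $P_{\mathcal{G}}(X_V) = \prod_{i \in R} P(X_i \mid X_{pa(i)}) \cdot \prod_{i \in I} P(X_i \mid X_{pa(i)})$ and summing over $X_{V\setminus e} = X_{R\setminus e}\cup X_{I}$, the relevant product depends only on $X_R$ (parents of relevant nodes are relevant) and can be pulled outside the sum over $X_I$. It then remains to show $\sum_{X_I} \prod_{i\in I} P(X_i \mid X_{pa(i)}) = 1$ for every fixed value of $X_R$. I would prove this by eliminating the variables of $I$ in reverse topological order: the topologically last node of $I$ has all its children in $I$ and hence, being last, has no children at all, so its variable occurs in a single conditional factor and sums to $1$; after removing it, the next node is handled the same way, since its remaining children all lie in $I$ and have already been summed out. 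Iterating collapses the entire irrelevant block to $1$.

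Finally I would identify the leftover expression $\sum_{X_{R\setminus e}} \prod_{i\in R} P(X_i \mid X_{pa(i)})$ with $P_{\mathcal{G}'}(X_e)$, using the Markov factorization of $P_{\mathcal{G}'}$ over $\mathcal{G}'$ together with the equality of parent sets and local tables established above. I expect the main obstacle to be the telescoping argument that the irrelevant block integrates to $1$: it hinges on the closure property that no irrelevant node is the parent of a relevant one, and on a careful ordering so that each node is summed out only after its children, guaranteeing that the node being eliminated appears in exactly one conditional factor at that stage. A secondary point worth stating explicitly is that $\mathcal{G}'$ inherits the conditional probability tables of the retained nodes from $\mathcal{G}$; without this the statement would be vacuous, and it is precisely what makes the two factorizations agree term by term.
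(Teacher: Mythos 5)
Your proposal is correct and follows essentially the same route as the paper's proof: marginalize the full Markov factorization, separate the factors over relevant and irrelevant nodes, and observe that the sum over the irrelevant block normalizes to one. The paper states this separation tersely, whereas you make explicit the two facts it tacitly relies on---that parents of relevant nodes are relevant (so the relevant factors can be pulled out of the sum over $X_I$, and the parent sets and local tables in $\mathcal{G}'$ agree with those in $\mathcal{G}$) and that the irrelevant block telescopes to $1$ by summing in reverse topological order---which is a legitimate and complete filling-in of the same argument.
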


\noindent
A consequence of Proposition~\ref{prop:relSub} is that only a fraction of the variables need to be considered for exact or approximate inference, when calculating marginal probability distributions. 

\subsection{Separation into Subsets}

We further introduce the concept of \textit{conditionally independent subsets}, which describes that the evidence can separate the relevant subgraph into groups of conditionally independent variables.
\begin{definition}[Conditionally Independent Subset]\label{def:CIS}
	Let $U \subset V$. A set of variables $X_U$ is a conditionally independent subset w.r.t.\ a set of variables $X_e$, if 
	\begin{itemize}
		\item all variables in the subset are d-connected, i.e.\ $X_i$ is d-connected to $X_j$ w.r.t.\ $e$, ${\forall  i,j \in U}$, and
		\item all variables in the subset are d-separated from the remaining variables, i.e.\ $X_i$ is d-separated from $X_j$ w.r.t.\ $e$, ${\forall i \in U , j \in V \setminus \{U \cup e \}}$.
	\end{itemize}
\end{definition}
As an example, consider the DAG shown in Figure~\ref{fig:example}\,(a) with the evidence nodes $E$, $N$ and $O$ marked in grey. As illustrated in Figure~\ref{fig:example}\,(b), the variables can be split into the two conditionally independent subsets $S_1$ and $S_2$. %Since the set of variables $A$ and $B$ are d-connected, while they are d-separated from all remaining variables ($G$, $J$, $K$ and $L$) by the evidence variable $E$, they fulfill all conditions of Definition~\ref{def:CIS} and form a conditionally independent subset.
Based on the conditions of Definition~\ref{def:CIS}, the variables $A$ and $B$ form a conditionally independent subset since they are d-connected and at the same time d-separated from all remaining variables ($G$, $J$, $K$ and $L$) by the evidence variable $E$. The same holds true for the variables of the subset $S_2$.

% All variables of the subset $S_1$, namely $A$ and $B$, are d-connected (first condition). At the same time, $A$ and $B$ are d-separated from all remaining variables ($G$, $J$, $K$ and $L$) by the evidence variable $E$ (second condition). 

% As an example, the DAG shown in figure~\ref{fig:example}\,(a) can be split into the two subsets $S_1$ and $S_2$ plotted in Figure~\ref{fig:example}\,(b), w.r.t. the evidence nodes $E,N,O$ marked in grey. 

\begin{proposition}[Uniqueness]\label{prop:unique}
	The conditionally independent subsets $S_{\text{all}}=\{S_1,...,S_N \}$ of a DAG $\mathcal{G}$ are unique w.r.t.\ a set of variables $X_e$.
\end{proposition}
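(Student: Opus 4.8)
The plan is to reduce the uniqueness claim to an elementary \emph{disjointness} property of conditionally independent subsets, and then to upgrade disjointness to uniqueness of the whole decomposition. The single fact I will lean on is that, for a fixed conditioning set $e$, every pair of nodes is \emph{either} d-connected \emph{or} d-separated w.r.t.\ $e$: the two notions are logical complements, so no pair can simultaneously be both. Everything else is bookkeeping against the two clauses of \cref{def:CIS}.

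First I would prove a disjointness lemma: if $U_1$ and $U_2$ both satisfy \cref{def:CIS} w.r.t.\ $X_e$ and $U_1 \cap U_2 \neq \emptyset$, then $U_1 = U_2$. Pick $i \in U_1 \cap U_2$ and suppose, for contradiction, that some $j \in U_1 \setminus U_2$ exists. Since $j$ is a non-evidence node lying outside $U_2$, i.e.\ $j \in V \setminus (U_2 \cup e)$, the second clause of \cref{def:CIS} applied to $U_2$ forces $i$ and $j$ to be d-separated w.r.t.\ $e$. But $i,j \in U_1$, so the first clause applied to $U_1$ forces $i$ and $j$ to be d-connected w.r.t.\ $e$. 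This contradicts complementarity, so no such $j$ exists and $U_1 \subseteq U_2$; exchanging the roles of $U_1$ and $U_2$ yields equality.

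Second I would deduce uniqueness of the collection. Given any two families $\{S_1,\dots,S_N\}$ and $\{T_1,\dots,T_M\}$, each consisting of conditionally independent subsets and each covering the same set of non-evidence nodes, every $S_i$ shares a node with some $T_j$, and the disjointness lemma then gives $S_i = T_j$. Hence the two families coincide, so $S_{\text{all}}$ is determined uniquely by $\mathcal{G}$ and $e$.

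The step to get right is complementarity together with the bookkeeping that the witness node $j$ is genuinely outside $e$, so that the second clause of \cref{def:CIS} is actually applicable; this is where I expect the only real subtlety to sit, and it is mild. If one additionally wanted to show that such a partition \emph{exists} — so that the $S_i$ are exactly the d-connection equivalence classes — the delicate point is that d-connectedness w.r.t.\ $e$ is not transitive in general, since an unobserved collider with no observed descendant blocks the path through it. This is precisely where passing to the relevant subgraph of \cref{prop:relSub} would matter: there every node lies on a directed path into $e$, so every collider is activated by the conditioning, which restores transitivity and makes the equivalence classes well defined. I expect verifying that transitivity — by concatenating two active trails at a shared non-evidence node and checking the junction is never blocking — to be the most technical part, although it is not needed for the uniqueness statement itself.
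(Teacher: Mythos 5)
Your core argument is exactly the paper's: the paper likewise proves uniqueness by contradiction, taking a node $v_1$ assigned to two distinct subsets $S_1 \neq S_2$ and a witness $v_2 \in S_1 \setminus S_2$, so that the first clause of Definition~\ref{def:CIS} applied to $S_1$ makes $v_1,v_2$ d-connected w.r.t.\ $e$ while the second clause applied to $S_2$ makes them d-separated — the same complementarity contradiction that drives your disjointness lemma. Your additional bookkeeping (explicitly upgrading ``overlapping subsets are equal'' to uniqueness of the covering family, noting that the witness must lie outside $e$, and the remark that \emph{existence} of the partition relies on the relevant subgraph of Proposition~\ref{prop:relSub} to restore transitivity of d-connection) is sound and makes implicit steps of the paper explicit, but the route is essentially the same.
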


\noindent
Note that the separation into conditionally independent subsets is unique, as proven in Appendix~\ref{proof:unique}. An efficient way of finding the unique separation into conditionally independent subsets is described in Algorithm~\ref{algo_disjdecomp}, where each conditionally independent subset is found by expanding around a randomly selected starting node. The computational complexity of this algorithm is linear in the size of the largest conditionally independent subset.

By separating the relevant subgraph into conditionally independent subsets, the problem of calculating the marginal probability distribution $P(X_e)$ can be split into sub-problems that can be solved independently, thus reducing the overall complexity. 
\begin{proposition}[Marginalization in Subsets]\label{prop:subnet}
	Let $\mathcal{G}'$ be the relevant subgraph of a DAG $\mathcal{G}$ w.r.t.\ a set of nodes $e$, and let $p_{\mathcal{G}'}$ and $p_{\mathcal{G}}$ be the respective probability distributions that satisfy the Markov properties. Let $S_{\text{all}}=\{S_1,...,S_N \}$ be the conditionally independent subsets of the relevant subgraph. Then the marginal probability distribution $P(X_e)$ can be calculated by independently summing over the subsets $S_i$ according to
	\begin{equation}
	P(X_e)=P\big(X_{e'}\big)\prod _{S_i \in S_{\text{all}}} \sum_{X_{S_i}} P\Big(X_{S_i}, X_{e_i^{ch}} \mid X_{e_i^{mb}\setminus e_i^{ch}}\Big)
	\end{equation}
	where $mb(u)$ is the Markov blanket of node $u$, $e_i^{mb}= e \cap \{mb(u): u \in S_i \}$ and $e_i^{ch}= e \cap \{ch(u): u \in S_i \}$. $e' = e \setminus \{ e_i^{ch} \,  \forall i \}$ are evidence nodes that have either no parents or only evidence nodes as parents. 
\end{proposition}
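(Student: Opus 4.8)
The plan is to reduce to the relevant subgraph, split the DAG factorization into one block per subset plus an evidence-only remainder, and then re-express each block as the conditional probability appearing in the statement. By Proposition~\ref{prop:relSub} I may replace $\mathcal{G}$ by $\mathcal{G}'$, so that Equation~\ref{equ:normConstDef} reads $P(X_e)=\sum_{X_{V'}}\prod_{v}P(X_v\mid X_{pa(v)})$ with $V'=\bigcup_i S_i$ the non-evidence nodes of $\mathcal{G}'$. The nodes then partition into the subsets $S_i$, the child-evidence sets $e_i^{ch}$, and the remaining evidence $e'$.

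The first real step is to extract the structural facts that the d-separation of Definition~\ref{def:CIS} forces. The crucial one is that no evidence node can be a child of two different subsets: a common evidence child of $S_i$ and $S_j$ would be an \emph{observed} collider, opening a path and d-connecting the two subsets, contradicting that they are distinct. Hence the $e_i^{ch}$ are pairwise disjoint and, with $e'$, partition $e$. By the same collider/direct-edge arguments, any non-evidence parent of a node in $S_i$, and any non-evidence parent of a node in $e_i^{ch}$, must itself lie in $S_i$, while every node of $e'$ has only evidence parents. Moreover, every child of a node in $S_i$ lies in $W_i:=S_i\cup e_i^{ch}$. Consequently the block $\prod_{u\in S_i}P(X_u\mid X_{pa(u)})\prod_{k\in e_i^{ch}}P(X_k\mid X_{pa(k)})$ over $W_i$ involves, besides $X_{S_i}$ and $X_{e_i^{ch}}$, only the boundary evidence $pa(W_i)\setminus W_i=e_i^{mb}\setminus e_i^{ch}$, and $X_{S_i}$ occurs in no other block.

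Next I would regroup the factorization as the product of the $e'$-factors and the $N$ blocks, pull the $e'$-factors out of the sum (they carry no $X_{S_i}$), and distribute $\sum_{X_{V'}}$ as $\prod_i\sum_{X_{S_i}}$; this is legitimate precisely because, by the previous step, the $i$-th block is the only place where the summation variables $X_{S_i}$ appear. What then remains is to identify each block $\sum_{X_{S_i}}\prod_{w\in W_i}P(X_w\mid X_{pa(w)})$ with $\sum_{X_{S_i}}P(X_{S_i},X_{e_i^{ch}}\mid X_{e_i^{mb}\setminus e_i^{ch}})$, and the collected $e'$-factors with $P(X_{e'})$.

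The hard part is this last identification, i.e.\ showing $\prod_{w\in W_i}P(X_w\mid X_{pa(w)})=P\big(X_{W_i}\mid X_{C_i}\big)$ with $C_i=e_i^{mb}\setminus e_i^{ch}$. I would prove it by a normalization argument in the spirit of Proposition~\ref{prop:relSub}: working on the ancestral set $an(W_i\cup C_i)$, the block factors out of $P(X_{W_i},X_{C_i})$, and one must show the remaining factors sum to exactly $P(X_{C_i})$. The obstacle is that nodes lying downstream of the evidence-children $e_i^{ch}$ can re-enter as ancestors of the conditioning boundary, so they cannot simply be discarded as irrelevant; proving that their contribution nonetheless normalizes to one is exactly where the full d-separation between the subsets is needed. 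Once this identity is established, summing $X_{S_i}$ out of $P(X_{S_i},X_{e_i^{ch}}\mid X_{e_i^{mb}\setminus e_i^{ch}})$ and collecting the $e'$-term yields the claimed formula.
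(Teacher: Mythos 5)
Your structural analysis and regrouping are correct, and in fact more careful than the paper's own argument: the sets $e_i^{ch}$ are indeed pairwise disjoint, all non-evidence parents of $W_i := S_i \cup e_i^{ch}$ lie in $S_i$, every node of $e'$ has only evidence parents, and therefore the factorization of $P_{\mathcal{G}'}$ splits into the $e'$-factors and $N$ blocks $\phi_i = \prod_{w \in W_i} P\big(X_w \mid X_{pa(w)}\big)$, with $\sum_{X_{V'}}$ distributing as $\prod_i \sum_{X_{S_i}}$. The genuine gap is your final step: the identification $\phi_i = P\big(X_{W_i} \mid X_{C_i}\big)$ with $C_i = e_i^{mb}\setminus e_i^{ch}$ is not merely hard, it is \emph{false} in general, and so is the companion claim that the collected $e'$-factors equal the marginal $P(X_{e'})$. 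Concretely, take the DAG $u \to m \to c \to v$, $u \to v$, $v \to k$ with evidence $e = \{m,c,k\}$. Then $S_1 = \{u,v\}$ is the unique subset ($u,v$ are d-connected by the edge $u\to v$), $e_1^{ch} = \{m,k\}$, $C_1 = \{c\}$ (the evidence co-parent of $v$), and $e' = \{c\}$. Here
\begin{equation*}
\phi_1 = P(X_u)\,P(X_m \mid X_u)\,P(X_v \mid X_c, X_u)\,P(X_k \mid X_v) = P\big(X_{W_1} \mid X_c\big)\cdot \frac{P(X_c)}{P(X_c \mid X_m)},
\end{equation*}
which differs from $P\big(X_{W_1}\mid X_c\big)$ whenever $X_c$ and $X_m$ are dependent—and $m\to c$ is an edge. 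Likewise $\prod_{v \in e'} P\big(X_v\mid X_{pa(v)}\big) = P(X_c \mid X_m) \neq P(X_c)$. The obstacle you yourself flagged—boundary evidence lying downstream of the evidence-children—is exactly this mechanism, and it is fatal rather than surmountable: no normalization argument can make the downstream factors sum to one, because the per-block identity simply fails. The two errors cancel only in the product of all blocks together with the $e'$-factors (one checks $P(X_c\mid X_m)\sum_{u,v}\phi_1 = P(X_c)\,P(X_m,X_k\mid X_c) = P(X_m,X_c,X_k)$), so any proof that, as you propose, establishes each block's identity separately cannot be completed.

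The proposition is nonetheless true, and there are two ways to finish. One is to read the factors $P\big(X_{S_i}, X_{e_i^{ch}} \mid X_{e_i^{mb}\setminus e_i^{ch}}\big)$ and $P(X_{e'})$ not as conditionals of the global joint but as the locally computed sub-network quantities—exactly what Algorithm \ref{algorithmSGS} and Appendix \ref{appendix:junctionTree} compute (the block CPT product with the $C_i$-potentials set to one, and $P(X_{e'}) = \prod_{v\in e'}P(X_v\mid X_{pa(v)})$). Under that reading, your regrouping argument already \emph{is} a complete and rigorous proof, with nothing left to show; note the paper itself commits the same looseness in asserting $P(X_{e'}) = \prod_{v\in e'}P(X_v\mid X_{pa(v)})$, which fails as a genuine marginal in the example above. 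The paper's actual proof takes the other route, working with true conditionals in aggregate: it writes $P(X_e) = P(X_{V'},X_e)\big/P(X_{V'}\mid X_e)$, factorizes the denominator via the mutual conditional independence of the subsets given $X_e$ (using $P(X_{S_i}\mid X_e) = P\big(X_{S_i}\mid X_{e_i^{mb}}\big)$ by the Markov blanket property), chain-rules the numerator, and cancels—so the per-block identity is never asserted, and the cross-block cancellation exhibited above is built in. Either adopt the local reading and stop after your regrouping, or switch to the ratio-and-cancellation argument; the per-block normalization plan you sketched cannot work.
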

By further exploiting the conditional independence of the subsets, it is possible to independently estimate some of the subsets by importance sampling (e.g.\ when exact inference is infeasible), while performing exact inference on the others. The full set of subsets would be split into the ones that are inferred exactly $S_{\text{exact}}$ and the ones that are estimated by means of importance sampling $S_{\text{approx}}$. 
\begin{proposition}[Estimator of Marginalization in Subsets]\label{prop:subnet2}
	Let $Q(X_{S_j})$ be the sampling distribution of the approximated subsets $S_j \in S_{\text{approx}}$ and let $S_j \in S_{\text{approx}}$ be the exactly inferred subsets. Then the marginal probability distribution $P(X_e)$ can be estimated according to
	\begin{equation}
	\begin{split}
	P(X_e)= P\big(X_{e'}\big)  &\prod _{S_i \in S_{\text{exact}}} \sum_{X_{S_i}} P\Big(X_{S_i}, X_{e_i^{ch}} \mid X_{e_i^{mb}\setminus e_i^{ch}}\Big) \\
	&\prod _{S_j \in S_{\text{approx}}} \mathbb{E}_{Q\big(X_{S_j}\big)}\left[\rule{0cm}{0.89cm} \frac{P\Big(X_{S_j} \mid X_{e_j^{mb}}\Big) P\Big(X_{e_j^{ch}} \mid X_{S_j}\Big)}{ Q\big(X_{S_j}\big)} \right]
	\end{split}
	\end{equation}
\end{proposition}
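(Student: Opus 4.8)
The plan is to obtain the estimator directly from Proposition~\ref{prop:subnet} by rewriting the per-subset factors associated with $S_{\text{approx}}$ as importance-sampling expectations while leaving those associated with $S_{\text{exact}}$ untouched. Starting from
\begin{equation*}
P(X_e)=P\big(X_{e'}\big)\prod _{S_i \in S_{\text{all}}} \sum_{X_{S_i}} P\Big(X_{S_i}, X_{e_i^{ch}} \mid X_{e_i^{mb}\setminus e_i^{ch}}\Big),
\end{equation*}
the first step is to partition the index set as $S_{\text{all}}=S_{\text{exact}}\cup S_{\text{approx}}$ and split the product over the two groups. This step is immediate: Proposition~\ref{prop:subnet} already presents $P(X_e)$ as a product of factors, one per conditionally independent subset, each depending only on $X_{S_i}$ together with the fixed evidence values in its Markov blanket. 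Hence regrouping the factors requires no further independence argument.

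The second step treats a single factor $\sum_{X_{S_j}} P(X_{S_j}, X_{e_j^{ch}} \mid X_{e_j^{mb}\setminus e_j^{ch}})$ with $S_j\in S_{\text{approx}}$. I would introduce the sampling distribution $Q(X_{S_j})$, assumed to dominate the summand (so that $Q(X_{S_j})>0$ wherever the summand is nonzero), and apply the elementary importance-sampling identity
\begin{equation*}
\sum_{X_{S_j}} g\big(X_{S_j}\big)=\sum_{X_{S_j}} Q\big(X_{S_j}\big)\,\frac{g\big(X_{S_j}\big)}{Q\big(X_{S_j}\big)}=\mathbb{E}_{Q(X_{S_j})}\!\left[\frac{g\big(X_{S_j}\big)}{Q\big(X_{S_j}\big)}\right],
\end{equation*}
with $g(X_{S_j})=P(X_{S_j}, X_{e_j^{ch}} \mid X_{e_j^{mb}\setminus e_j^{ch}})$. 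Since the evidence is fixed, this rewriting is an exact identity; replacing each such expectation by a sample average then yields an unbiased Monte Carlo estimator of the factor, and, using independent samples across subsets, of the whole product.

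The third and most delicate step is to rewrite the numerator $g(X_{S_j})$ in the pointwise-computable product form appearing in the statement. Here I would reuse the chain-rule-plus-d-separation argument from the proof of Proposition~\ref{prop:subnet}. Applying the chain rule gives
\begin{equation*}
P\big(X_{S_j}, X_{e_j^{ch}} \mid X_{e_j^{mb}\setminus e_j^{ch}}\big)=P\big(X_{S_j}\mid X_{e_j^{mb}\setminus e_j^{ch}}\big)\,P\big(X_{e_j^{ch}}\mid X_{S_j},X_{e_j^{mb}\setminus e_j^{ch}}\big),
\end{equation*}
and I would then simplify the two conditioning sets exactly as before: the evidence children $e_j^{ch}$ are colliders, so the remaining Markov-blanket evidence is d-separated from $X_{S_j}$ when the children are not conditioned on, and each child's conditional law reduces to its local CPT. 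This bookkeeping produces the two factors $P(X_{S_j}\mid X_{e_j^{mb}})$ and $P(X_{e_j^{ch}}\mid X_{S_j})$ of the statement. I expect this d-separation accounting --- keeping precise track of which evidence nodes enter each conditioning set, and verifying that the resulting weight is genuinely a function of $X_{S_j}$ and the fixed evidence alone --- to be the main obstacle, since a careless choice of factorization yields a summand whose sum over $X_{S_j}$ no longer recovers the target factor and would render the estimator biased.
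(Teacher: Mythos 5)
Your proposal follows the paper's proof almost verbatim: the paper likewise starts from Proposition~\ref{prop:subnet}, splits the product over $S_{\text{all}}=S_{\text{exact}}\cup S_{\text{approx}}$, and multiplies and divides each approximate factor by $Q(X_{S_j})$ to express it as $\mathbb{E}_{Q(X_{S_j})}\bigl[P\bigl(X_{S_j},X_{e_j^{ch}}\mid X_{e_j^{mb}\setminus e_j^{ch}}\bigr)/Q(X_{S_j})\bigr]$; your explicit support condition on $Q$ is a point the paper leaves tacit. The paper's proof in fact stops at that joint-numerator form and never carries out your third step, so you are more careful there --- but note that step~3 as written overshoots. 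The chain rule plus the collider argument yields $P\bigl(X_{S_j}\mid X_{e_j^{mb}\setminus e_j^{ch}}\bigr)$ as the first factor, with the child evidence \emph{excluded} from the conditioning set (exactly as in the paper's Appendix sampling formula), and no d-separation argument can convert this into $P\bigl(X_{S_j}\mid X_{e_j^{mb}}\bigr)$: d-separation lets you drop nodes from a conditioning set when paths are blocked, but adding the collider children to the conditioning set genuinely changes the law of $X_{S_j}$. The factor $P\bigl(X_{S_j}\mid X_{e_j^{mb}}\bigr)$ in the proposition's display is therefore best read as a notational slip for $P\bigl(X_{S_j}\mid X_{e_j^{mb}\setminus e_j^{ch}}\bigr)$, with $P\bigl(X_{e_j^{ch}}\mid X_{S_j}\bigr)$ correspondingly abbreviating $P\bigl(X_{e_j^{ch}}\mid X_{S_j},X_{e_j^{mb}\setminus e_j^{ch}}\bigr)$; your steps 1--3 prove that corrected identity, which is the one the paper actually implements, rather than the literal display, so the inherited mismatch is the paper's, not a flaw in your argument's substance.
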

Applying the subgroup separation scheme leads to a variance that is lower or equal, as can be shown with the Rao-Blackwell theorem
\begin{equation}
Var \bigg[\frac{P(X_{S_i},X_{S_j})}{Q(X_{S_i},X_{S_j})}\bigg] \geq Var \bigg[\frac{P(X_{S_i})}{Q(X_{S_i})}\bigg] 
\end{equation}
where the probability distribution ${P(X_{S_i})=\sum_{X_{S_j}} P(X_{S_i},X_{S_j}) }$ and the importance distribution $Q(X_{S_i})=\sum_{X_{S_j}} Q\big(X_{S_i},X_{S_j}\big)$ (proof of inequality in \cite{maceachern1999}).

Since the subsets $S_{\text{exact}}$ which undergo exact inference have zero variance, the approximated subsets $S_{\text{approx}} \subseteq S_{\text{all}}$ introduce the variance of the subgroup separation scheme. By applying the Rao-Blackwell theorem, it follows that using the subgroup sampling scheme will lead to a lower or equal variance compared to a standard approximate inference method that is applied on the variables of all subsets % $S_{\text{all}}$
\begin{equation}\label{equ:raoBlackwell}
Var \bigg[\frac{P(X_{S_{\text{all}}})}{Q(X_{S_{\text{all}}})}\bigg] \geq Var \bigg[\frac{P(X_{S_{\text{approx}}})}{Q(X_{S_{\text{approx}}}) }\bigg] , \text{ for } S_{\text{approx}} \subseteq S_{\text{all}}
\end{equation}
where equality is reached in the case of a fully connected DAG. In contrast to collapsed particles (also known as Rao-Blackwellisation and cutset sampling), where a subset of variables is also computed by using exact inference, subgroup separation comes with two major advantages. Firstly, finding the conditionally independent subsets is computationally cheap as it has a linear complexity and works on the DAG space (Algorithm~\ref{algo_disjdecomp}). Secondly, the computational complexity of the sampling process in our method does not grow exponentially with the number of variables computed with exact inference. This is in contrast to the collapsed particles approach, where the sampling space of the approximation grows exponentially with each variable computed by exact inference, since the approximation is not separated from the exact calculation. This issue is avoided in our separation, which allows for independent exact and approximate inference. %is arranged such that the sampled variables are only conditioned on evidence variables. 

\begin{algorithm}[h]
	\SetKwData{addedNode}{addedNode}
	\SetKwFunction{FindSubSets}{FindSubSets}\SetKwFunction{JunctionTree}{JunctionTree}
	%		\SetKwFunction{ExactInference}{EI}\SetKwFunction{ApproximateInference}{AI}
	\SetKwInOut{Input}{input}\SetKwInOut{Output}{output}
	\KwIn{A DAG $\mathcal{G} = (V, E)$ and evidence nodes $e$}
	\KwOut{Conditionally independent subsets $S_1,...,S_N$}
	\BlankLine
	
	%	Determine irrelevant nodes:
	%	$V_{\text{irrelevant}}\leftarrow V\setminus \{e \cap \text{ancestors}(e)\} $
	%	
	%	Let $B'= (\mathcal{G}', P')$ be the relevant subgraph upon removal of $V_{\text{irrelevant}}$
	%	%$\mathcal{G}' \leftarrow (V \setminus V_\text{irrelevant},E)$
	
	%	Determine the relevant subgraph:
	%	$\mathcal{G}' \leftarrow \mathcal{G} \mid _{e \cap \text{ancestors}(e)}$
	
	Non-evidence variables: $V'\leftarrow V\setminus e$
	
	$i \leftarrow 1$
	
	\Repeat{$V' = \emptyset$}{
		
		Pick a starting node: $V_{\text{added}} \leftarrow V'[1]$
		
		Initialize the subgroup: $S_i\leftarrow V_{\text{added}}$
		
		Starting from $V_{\text{added}}$, fill $S_i$:
		
		\Repeat{$V_{\text{added}} = \emptyset$}{
			
			$v \leftarrow V_{\text{added}}[1]$
			
			$V_{\text{added}} \leftarrow mb(v)\setminus \{ S_i \cup e \}$
			
			$S_i \leftarrow S_i \cup V_{\text{added}}$
			
			$V_{\text{added}}\leftarrow V_{\text{added}} \setminus v$
			
		}
		\vspace{0.2cm}
		Update remaining nodes: $V' \leftarrow V' \setminus S_i$
		
		$i\leftarrow i+1$
	}
	
	\caption{Finding Conditionally Independent Subsets (\texttt{FindSubSets})}\label{algo_disjdecomp}
\end{algorithm}%\DecMargin{1em}

\begin{algorithm}[h]
	\SetKwData{DimLimit}{DimLimit}
	\SetKwFunction{FindSubSets}{FindSubSets}\SetKwFunction{JunctionTree}{JunctionTree}
	\SetKwFunction{ExactInference}{EI}\SetKwFunction{ApproximateInference}{AI}
	\SetKwFunction{JunctionTree}{JunctionTree}\SetKwFunction{StochSampl}{StochSampl}
	\SetKwInOut{Input}{input}\SetKwInOut{Output}{output}
	\KwIn{A Bayesian network $(\mathcal{G}, P)$ and evidence nodes $e$ with corresponding variables $(X_i)_{i\in e}$}
	\KwOut{Marginal probability distribution $P(X_e)\!\!\!\!$}
	\BlankLine
	
	%	Determine irrelevant nodes:
	%	$V_{\text{irrelevant}}\leftarrow V\setminus \{e \cap \text{ancestors}(e)\} $
	%	
	%	Let $B'= (\mathcal{G}', P')$ be the relevant subgraph upon removal of $V_{\text{irrelevant}}$
	%	%$\mathcal{G}' \leftarrow (V \setminus V_\text{irrelevant},E)$
	
	Determine relevant subgraph:
	$(\mathcal{G}', P')\leftarrow (\mathcal{G}, P) \mid _{e \cap \text{ancestors}(e)}$
	
	%		Spit variables into conditionally independent subsets:\\
	$S_1,..., S_N$ $\leftarrow$ \FindSubSets{$\mathcal{G}',e$}
	
	\For{$S_i\in S$}{
		\eIf{$ \mid S_i \mid $ $<$ $n_{\text{max}}$}{
			Use exact inference where possible:\\
			$P\Big( \! X_{e_i^{ch}}\! \mid \! X_{e_i^{mb}\setminus e_i^{ch}} \! \Big) \leftarrow$ \ExactInference\Big($\!X_{S_i}, X_{e_i^{mb}}\!$\Big)
		}{
			Use approximate inference where needed:\\
			$P\Big(\! X_{e_i^{ch}} \! \mid \! X_{e_i^{mb}\setminus e_i^{ch}} \! \Big) \leftarrow$ \ApproximateInference\Big($\! X_{S_i}, X_{e_i^{mb}}\!$\Big)
		}
	}
	Compute probability of remaining variables:
	$P\big(X_{e'}\big) \leftarrow \prod_{i \in e'} P\big(X_i \mid X_{pa(i)}\big)$ 
	
	Compute marginal probability distribution:
	$P(X_e) \leftarrow P\big(X_{e'}\big)\prod _{S_i \in S} P\Big(\!X_{e_i^{ch}} \! \! \mid  \! X_{e_i^{mb}\setminus e_i^{ch}}\!\!\Big)$
	
	\caption{Subgroup Separation (\texttt{SGS})}\label{algorithmSGS}
\end{algorithm}%\DecMargin{1em}

\subsection{Subgroup Separation (SGS) Algorithm}\label{sec:SGS}

The structure of the subgroup separation (SGS) scheme is outlined in Algorithm~\ref{algorithmSGS} and can be described in four main parts. In the first part, the Bayesian network is reduced to its relevant subgraph and split into conditionally independent subsets $\{S_1,... S_N\}$ by using \texttt{FindSubSets} as described in Algorithm~\ref{algo_disjdecomp}. In the second and third parts, the individual subsets are processed either by exact (\texttt{EI}) or approximate inference (\texttt{AI}), depending on the size of the subset at hand. The fourth part concludes the calculation of the probability distribution of the left evidence nodes $e'$. 

As a consequence of the definition of nodes $e'$, their parents are a subset of the evidence nodes, i.e.\ $pa(v) \subset e,  \forall v \in e'$. This allows for the simple factorization
\begin{equation}
P\big(X_{e'}\big)=\prod_{v \in e'} P\Big(X_v \mid X_{pa(v)}\Big)
\end{equation}
While the choice of the conditionally independent subsets is unique (proof in Appendix~\ref{proof:unique}), their separation into exact and approximate inference subsets is specified by $n_{\text{max}}$. The hyperparameter $n_{\text{max}}$ tunes the ratio of exact and approximate inference in the marginalization, where a low $n_{\text{max}}$ results in a high fraction of approximate inference and a high $n_{\text{max}}$ results in a high fraction of exact inference. In our experiments, $n_{\text{max}}$ was set to $15$ to suit a broad range of parameters. More details on parameter selection and the algorithms complexity can be found in Appendix~\ref{sec:appAlgo} and \ref{sec:appalgocomple}, respectively. In our implementation, we use the junction tree algorithm for exact inference (\texttt{EI}) since it allows us to calculate the exact marginal probability distribution of each subset $P\Big(X_{e_i^{ch}} \mid X_{e_i^{mb}\setminus e_i^{ch}}\Big)$ at a low computational cost in low-dimensional subsets. Applying the dimensionality-reducing SGS scheme on the junction-tree algorithm reduces its computational cost, making exact inference possible for a
broader range of problems. On high-dimensional subsets that do not allow for exact inference, approximate inference is used for marginalization in the subset. As an approximate inference method (\texttt{AI}), we chose loopy belief propagation with subsequent importance sampling, where the importance sampling step guarantees that the approximation converges to the true marginal probability distribution. As its sub-calculations are performed by exact inference and importance sampling, which are unbiased methods \citep{koller2009}, the SGS scheme leads to an unbiased estimation of the marginal probability distribution. More details on the application of the junction-tree algorithm and sampling in the SGS scheme can be found in Appendix~\ref{appendix:junctionTree} and~\ref{appendix:importanceSampl}.

\section{Experiments}\label{sec:Experiments}

To assess the performance of our SGS scheme, we benchmarked it against standard inference schemes over a broad range of different Bayesian networks. To investigate structural dependencies, we simulated Bayesian networks of different dimensions, graph types, network densities, category sizes and fractions of evidence variables. For each setting, we simulated 100 Bayesian networks and selected the respective fraction of evidence variables at random. The CPTs $P_i$ were simulated from a uniform distribution, and then normalized to obtain discrete categorical variables. A more detailed description of the varied Bayesian network parameters can be found in Appendix~\ref{appendix:bench}.

\begin{figure}%[h!]
	%	\centering
	\includegraphics[scale=0.70]{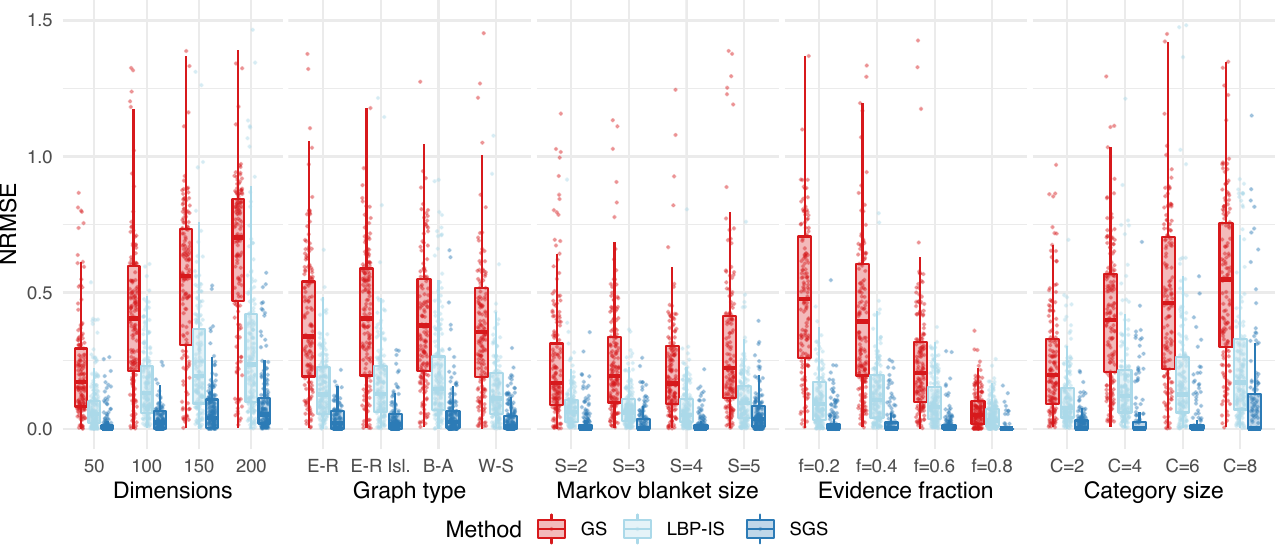}
	\caption{Boxplot of NRMSE of different sampling schemes for a fixed time ($t$=\( 0.2\)\,s). Corresponding plots over time are in Appendix~\ref{appendix:bench}. % according to the following parameters: Erdős–Rényi graph with average Markov blanket size $S=3.3$, number of categories $C=2$, fraction of evidence nodes $f=0.5$. 
		%Only a fraction of the data points is plotted for visibility.
		GS: Gibbs sampling, LBP-IS: loopy belief propagation with subsequent importance weighting, SGS: subgroup separation;
		E-R: Erdős–Rényi, E-R Isl.: Erdős–Rényi island, B-A: Barabási–Albert, W-S: Watts-Strogatz.}
	\label{fig:all}
\end{figure}

%We simulated Bayesian networks of different dimensions, graph types, network densities, category sizes and fractions of evidence variables. and compared the results to standard inference schemes. For different numbers of variables $n\in\{50, 100, 150, 200\}$, we simulated 100 Bayesian networks at random according to the Erdős–Rényi graph option with the \textbf{pcalg} package \citep{kalisch2012}. %, i.e.\ all DAGs with a given dimension and fixed number of edges are equally likely. 
%For each DAG, the CPTs $P_i$ were simulated from a uniform distribution, and then normalized to obtain discrete categorical variables. In the main setting, binary variables (i.e.\ number of categories $C=2$) were simulated with an average Markov blanket size of $S=3.3$, from which half were randomly selected to be evidence variables. In further simulations, we assessed whether the methods are dependent on different graph types, network densities, category sizes and fractions of evidence variables (more details in Appendix~\ref{appendix:bench}).

Our SGS algorithm was benchmarked against Gibbs sampling and loopy belief propagation, which are prominent inference methods of MCMC sampling and variational inference, respectively. Since Gibbs sampling and loopy belief propagation do not directly yield the marginal probability distribution (as discussed in Section~\ref{sec:MessagePassing} and Section~\ref{sec:sampling}), subsequent importance sampling was used to estimate the marginal probability distribution, as proposed by \cite{yuan2006}; the methods will be referred to as GS and LBP-IS. %More details on the benchmarked algorithms can be found in Appendix~\ref{appendix:junctionTree} and~\ref{appendix:importanceSampl}. 

Since the approximate inference step in the SGS scheme is performed with LBP-IS, the comparison of SGS and LBP-IS is of particular interest, as it reveals the gain in efficiency of the proposed calculation in conditionally independent subsets. %Hence, while any other approximate inference scheme could be used, the relative gain in efficiency when using our SGS scheme is expected to be equivalent. 
%Hence, we show how the accuracy of an approximate inference scheme can be increased using our separation. 
While there are various other approximate inference schemes that could have been benchmarked, the relative gain in efficiency when using our SGS scheme is expected to be equivalent. 

To compare the efficiency of the different inference methods, the respective deviation of approximate and exact marginal probability distribution was calculated as a function of time.  %The exact marginal probability distribution was determined with the junction-tree algorithm.
Note that while the Kullback-Leibler divergence is commonly used to compare approximate inference schemes, it is not suitable for comparing the inference of the marginal probability distribution of categorical variables as it requires the summation over all possible states of the evidence variables. This is not computationally feasible for the high-dimensional settings we consider. To account for the different sizes of the marginal probabilities of the simulated Bayesian networks, the normalized root mean squared error (NRMSE) was used for comparing the accuracy of each method according to
\begin{equation}
\text{NRMSE}=\sqrt{\frac{\sum_{i=1}^{n} (P(X_e) - \mathbb{E}_i[P(X_e)])^2}{n}}\cdot P(X_e)^{-1}
\end{equation}
where $\mathbb{E}_i[P(X_e)]$ is the i-th estimation of $P(X_e)$.

The \textbf{R} package \textbf{SubGroupSeparation}, including all used methods, reproducible benchmarks and applications, is available at \url{https://github.com/cbg-ethz/SubGroupSeparation}. The code is written in R and has core functions partially written in C. The \textbf{SubGroupSeparation} algorithm interfaces with Bayesian networks from the \textbf{bnstruct} package \citep{franzin2017bnstruct}, the \textbf{Bestie} package, and the \textbf{BiDAG} package \citep{suter2021bayesian,kuipers2022efficient}.

%\begin{figure}
%Do not try to scale figure in .tex or you loose font size consistency
%	\centering
%The code to input the plot is extremely simple
%	\input{figures/benchmark/plot_test}
%Captions and Labels can be used since this is a figure environment
%	\caption{Normalized probability of each cancer subtype calculated using the normalizing constant. The patient samples are sorted by probability and grouped by subtype.}
%	\label{plot:sf}
%\end{figure}

%\subsection{Parameter Selection}

%The maximal dimension up to which exact inference is performed in the subsets $n_{\text{max}}$ is tuned to the sampling time at hand. For each subset, the computational cost of approximate inference must not exceed the computational cost of exact inference, i.e.\ a longer sampling time allows for higher $n_{\text{max}}$. It is set as the dimension of a subset at which the average time of exact inference exceeds the average time of approximate inference. Note that $n_{\text{max}}$ depends on the given sampling time per number of variables in the relevant subgraph. In our experiments $n_{\text{max}}$ was set to $15$ for fast inference in the broad parameter range described above.

\subsection{Benchmark Results}\label{sec:benchres}

\begin{figure}%
	\captionsetup[subfigure]{labelformat=empty}
	\centering
	\subfloat[\centering  ]{{\includegraphics[scale=0.59]{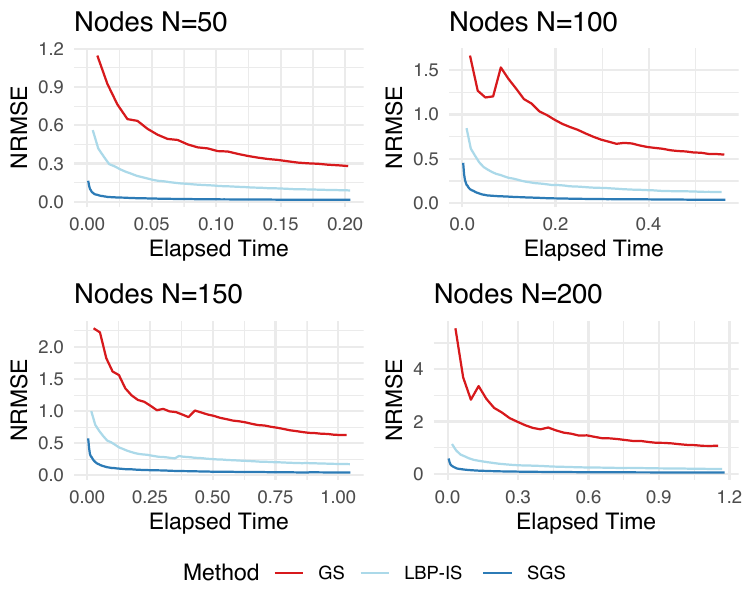} }}%
	%	\qquad
	\subfloat[\centering  ]{{\includegraphics[scale=0.59]{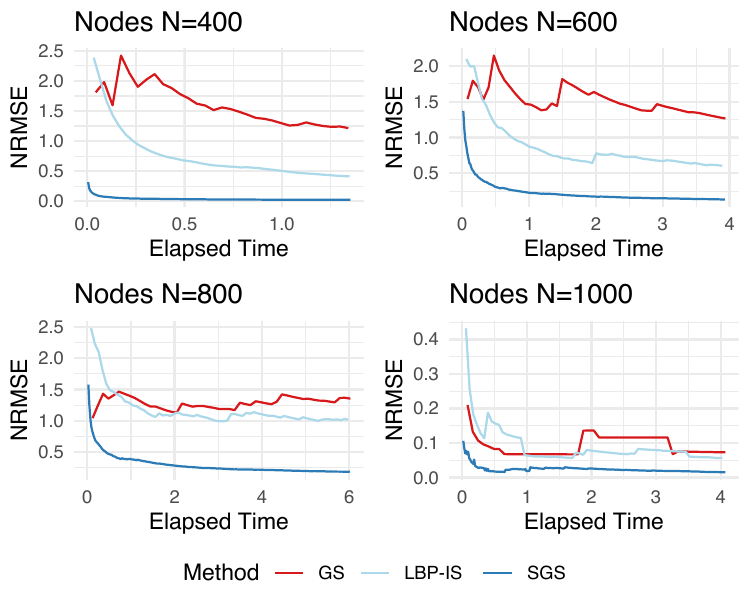} }}%
	\caption{NRMSE of different sampling schemes across different dimensions (lower is better). % (a) and graph types (b) (lower is better). %The benchmark simulated 100 Bayesian networks according to the following parameters: Erdős–Rényi graph with average Markov blanket size $S=3.3$, number of categories $C=2$, fraction of evidence nodes $f=0.5$. 
		GS: Gibbs sampling, LBP-IS: loopy belief propagation with subsequent importance weighting, SGS: subgroup separation.}%
	\label{fig:FigureDims}%
\end{figure}

%; the corresponding expected number of neighbours are 1.4, 1.9, 2.35 and 2.76, respectively. 
%MB=5 - 2.76 %MB = 4 - 2.35 %MB=3 - 1.9 %MB = 2 - 1.4
% expected number of neighbours was $S\in\{1, 2, 3, 4\}$, covering the range of sparse to dense networks; the corresponding Markov blanked sizes are 1.3, 3.3, 5.8 and 8.9, respectively.

% Figure~\ref{fig:FigureDims} shows the average NRMSE over time of the different schemes benchmarked for a range of different dimensions. %The figure highlights that the increase of accuracy in the SGS scheme proves to be consistent across the broad range of dimensions. %This is further illustrated in Figure~\ref{fig:FigureDimsBoxPlot}, which shows the estimation error for a fixed sampling time ($t$=\( 0.2\)\,s) for all repetitions.  % Across all dimensions, the subgroup separation scheme shows a high increase in accuracy. 
%Our benchmark results over a broad range of different Bayesian networks are summarized in Figure~\ref{fig:all}, showing the influence of different graph types, network densities, category sizes and fractions of evidence variables.
A summarizing plot of our benchmarks across a broad range of Bayesian networks is given in Figure~\ref{fig:all}, showing the influence of different dimensions, graph types, network densities, category sizes and fractions of evidence variables. A more detailed analysis can be found in Appendix~\ref{appendix:bench}. % and Figure~\ref{fig:benchmarkALL}.
Figure~\ref{fig:FigureDims} shows the average NRMSE over time for a broad range of dimensions, including high-dimensional settings up to a thousand variables.
The figures highlight, that our SGS scheme yields a high increase in accuracy across a broad range of Bayesian networks.
 
%

%To access the robustness of our results across a broad range of Bayesian networks, we also investigated the influence of different graph types, network densities, category sizes and fractions of evidence variables. The results are summarised in Figure~\ref{fig:all} and a more detailed analysis can be found in Appendix~\ref{appendix:bench}. %Further benchmarks are attached in Appendix~\ref{appendix:bench}, investigating the influence of different graph types, network densities, category sizes and fractions of evidence variables. 
%The figures highlight, that our SGS scheme yields a high increase in accuracy across a broad range of Bayesian networks.
%The figures highlight that the increase of accuracy in the SGS scheme proves to be consistent across the broad range of Bayesian networks.

The combination of LBP and subsequent importance weighting (LBP-IS) appears to be more efficient than Gibbs sampling. %This is in line with previous findings by \cite{yuan2006}, which compared the performance of LBP-IS and Gibbs sampling in the estimation of the probability distribution of a single node in Bayesian networks. While the estimation of the marginal probability distribution is a more general problem, the advantage of LBP-IS over Gibbs sampling holds. 
Since SGS and LBP-IS use the same approximate inference method, the increase in efficiency in our SGS scheme can be ascribed to splitting the problem into conditionally independent subsets that are inferred independently. This confirms the theoretically expected lower variance of the SGS scheme (consequence of the Rao-Blackwell theorem in Equation~\ref{equ:raoBlackwell}).

In the limit of very high network densities, the efficiency gained with our SGS scheme is expected to decrease as no conditionally independent subgroups can be formed in a fully connected DAG.

\begin{figure}
	\centering
	\begin{subfigure}{.48\textwidth}
		\centering
		\includegraphics[width=.81\linewidth]{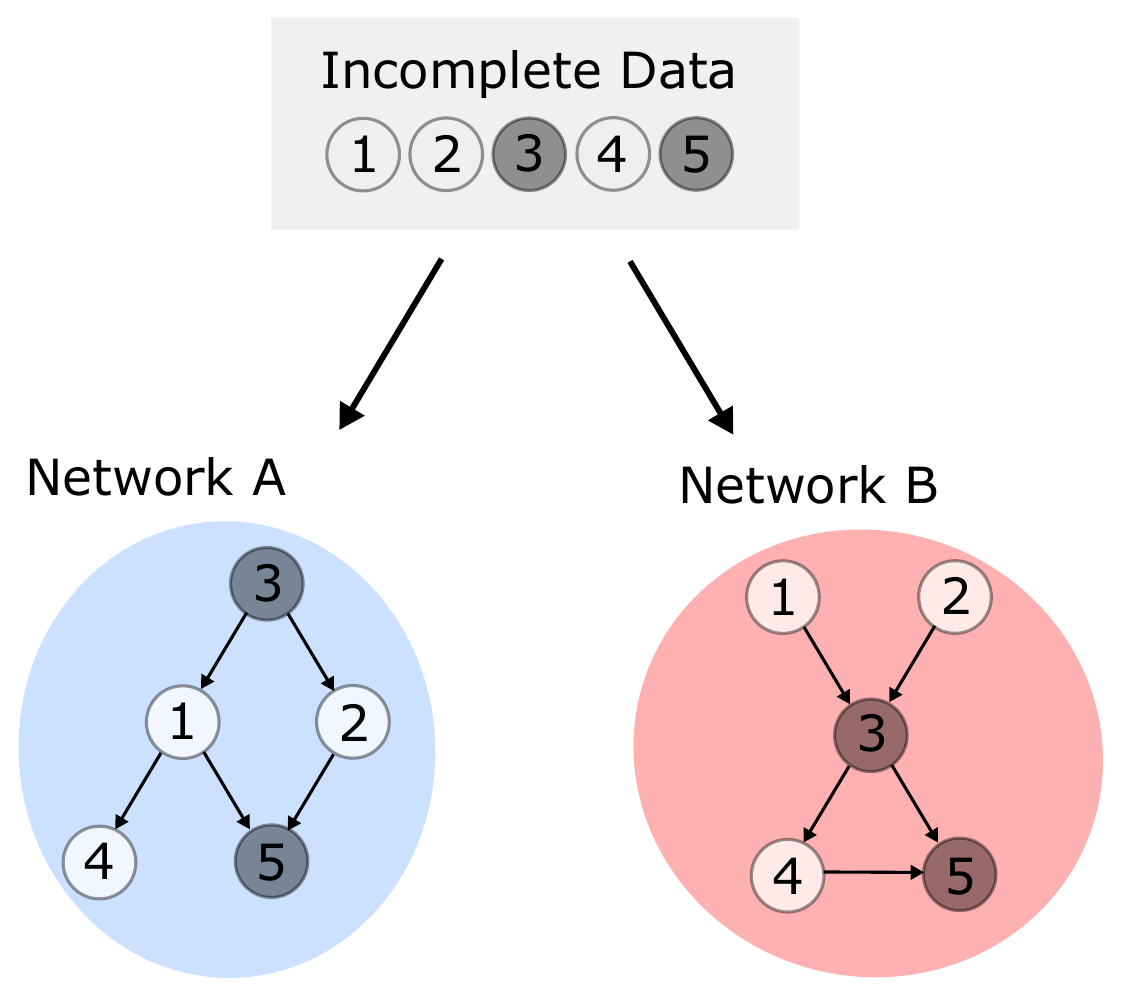}
		%		\vspace{0.2cm}
		\caption{}
		\label{fig:test}
	\end{subfigure}%
	\begin{subfigure}{.5\textwidth}
		\centering
		\vspace{0.2cm}
		\input{figures/BarPlot_inference2.tex}
		\caption{}
		\label{fig:BarPlotMain}
	\end{subfigure}
	\caption{Classification of incomplete data against two Bayesian networks (a). Normalized probability of each cancer subtype (b). ccRCC: clear cell RCC, pRCC: papillary RCC.}
	\label{fig:notRel}
\end{figure}

\section{Application}\label{sec:appl}

% As shown by \cite{kuipers2018}, Bayesian networks can be used to cluster mutational data to identify cancer subgroups. However, learning the Bayesian networks that represent each cluster requires a large amount of samples and is therefore not possible on small cohorts. In such cases, a larger database, such as for instance The Cancer Genome Atlas (TCGA), can be used as background information on which the clusters are learned. The samples of the small cohort can then be assigned to the clusters (i.e. Bayesian networks) learned on the larger database.

% Given the tremendous possibility of genes that can be acquired, different studies generally acquire different sets of genes. In order to unify the datasets, one needs to integrate out the missing data and compute the probability of the unobserved variables, i.e., their normalizing constant. 

As an immediate application of our SGS method, we demonstrate how the marginal probability distribution can be used to account for missing data in the classification of cancer subtypes. In particular, we employ Bayesian networks to identify the cancer subtype of a patient from the observed mutational pattern of their tumor. We considered a dataset taken from a small population study \citep{suh2020}, including mutational profiles of 25 patient samples diagnosed with renal cell carcinoma (RCC), which is the most common type of kidney cancer in adults \citep{hsieh2017}. The samples consist of the two most frequent histological subtypes, namely clear cell RCC (ccRCC) and papillary RCC (pRCC). To determine the cancer subtype of these patients, we classified their mutational profile against clusters learned on the TCGA database \citep{cancer2008}, as illustrated in Figure~\ref{fig:notRel}. On the TCGA data, we performed supervised Bayesian network modelling for the two subtypes ccRCC and pRCC, and learned a Bayesian network from the patient samples for each of the two subtypes. 
We considered the 70 most significantly mutated genes ($q < 0.1$) of the TCGA dataset, of which only 26 were also observed in the panel of the small population study. Given that only 26 of the 70 genes were observed in the small population study, marginalization was necessary to account for the missing variables in the classification. 
By using our SGS scheme, we calculated the probabilities $P(X_e|\mathcal{B}_i)$ of the patient samples $X_e$ to correspond to the Bayesian network models $\mathcal{B}_i$ with $i \in \{\text{ccRCC, pRCC}\}$. 
Figure~\ref{fig:notRel} shows the normalized probabilities of each cancer subtype for the 20 ccRCC (left) and the 5 pRCC (right) patient samples. 
The patient samples were assigned to the cancer subtype with the highest probability. We split the TCGA data into a training and a test dataset consisting of 80\% and 20\% of the samples, respectively to determine the network learning parameters. To quantify the impact of the marginal probability distribution, the whole analysis was repeated without marginalization by considering only the 26 mutual genes. 
More details on the analysis and a comprehensive comparison to the scenario without marginalization can be found in Appendix~\ref{sec:appappl}. 
While there are various other methods that can be used for classification, we focus here on showing how missing data can be handled in the application of Bayesian networks to biological data. 
%
%To illustrate our proposed method in a real-world problem, we showed how incomplete data can be classified according to different Bayesian networks by marginalization. Specifically, we demonstrated how Bayesian networks can be used to identify the cancer subtype of a patient from its tumor genotype if not all genes are observed. 
By calculating the marginal probability distribution to account for the missing genes, we were able to assign \SI{76}{\percent} of the samples to the correct cancer type. In contrast, only \SI{68}{\percent} of the samples were assigned correctly when only mutual genes were considered in the analysis. %This shows, that considering only mutually observed genes reduces the accuracy of the classification. Instead, we obtained more accurate results by accounting for the missing genes by marginalization. 
The fact that not all samples were assigned to the correct cancer type also reflects the broad diversity of mutational profiles. 
Hence, a misclassification could indicate that the mutational profile is unusual for the cancer type at hand. This could imply that the tumor behaves differently than a typical tumor of the same type and can be exploited to further investigate potential for targeted treatment and survival prediction. 

%\begin{figure}% [ht]
%	\centering
%	\includegraphics[width=.55\linewidth]{figures/cluster.pdf}
%	%	\caption{Classifying incomplete data}
%	\label{fig:sub2}
%	\caption{Classification of incomplete data against two Bayesian networks, illustrating how incomplete mutational profiles are assigned to different cancer subtypes. % The probability of a Bayesian network given complete data (a) is comprised in the conditional probability distributions of each variable. In contrast, 
%		% The probability of a Bayesian network given incomplete data (b) requires to calculate the marginal probability distribution.
%	}
%	\label{fig:test}
%\end{figure}
%

%\begin{figure}%[ht]
%	%Do not try to scale figure in .tex or you loose font size consistency
%	\centering
%	%The code to input the plot is extremely simple
%	\input{figures/BarPlot_inference2.tex}
%	%Captions and Labels can be used since this is a figure environment
%	\caption{Normalized probability of each cancer subtype, namely clear cell RCC (ccRCC) and papillary RCC (pRCC). Patient samples are sorted by probability and grouped by subtype.}
%	\label{fig:BarPlotMain}
%\end{figure}

\section{Conclusion}\label{sec:disc}

We have presented an efficient and scalable method for inferring the marginal probability distribution in Bayesian networks that splits the calculation into sub-calculations of lower dimension. Instead of applying one inference method for the whole calculation, our scheme applies exact and approximate inference on the lower-dimensional sub-calculations. %Being exact where possible and approximate where needed leads to an accurate estimation of the normalizing constant at low computational cost. 
The scheme we propose comes with an implementation that is compatible with several packages for Bayesian network structure learning and allows one to accurately handle missing data in Bayesian networks. 

Alongside theoretical justifications, we demonstrated empirically that our method highly increases accuracy. Over a broad range of different benchmarks, it reached more accurate estimates of the marginal probability distribution compared to state-of-the-art approximate inference methods. A limitation is that there is no gain in accuracy in the limit of fully connected DAGs, which is however unrealistic for Bayesian network applications, since fully connected DAGs are fully parametrised.
%While there is no gain in accuracy in the limit of fully connected DAGs, this setting is unrealistic for Bayesian network applications, since fully connected DAGs are fully parametrised. %Nevertheless, we have shown that the accuracy can not be decreased by applying our method even in such a setting. 
Our method is particularly efficient in high-dimensional sparse DAGs, which are expected in many real-world problems such as genomics and systems biology. 

%In the limit of fully connected DAGs, the increase in accuracy dissolves. 

%In this paper, we implemented the junction-tree algorithm for exact inference and loopy belief propagation with subsequent importance sampling for approximate inference. However, 
The core idea of marginalization in lower-dimensional chunks of conditionally independent subsets is presented as a general framework that can be generalized to other exact and approximate inference schemes. % Future improvements could involve tuning the parameters of the underlying inference scheme, though the relative increase in accuracy of our method is expected to be the same.
%Further, the separation could be used to implement parallel algorithms. 
Further, our proposed separation can be used to parallelize common inference algorithms, by running each sub-calculation in parallel. 

% Note that while loopy belief propagation with subsequent importance sampling has been taken as the basis for approximate inference in the proposed SGS scheme, any other approximate inference scheme could have been chosen as well. The core idea of estimating the normalizing constant in lower-dimensional chunks of conditionally independent subsets is unattached to the inference scheme at hand and is presented as a general framework that can increase the accuracy of any approximate inference scheme. Future improvements could involve tuning the parameters of the underlying inference scheme, though this will not change the relative increase in accuracy in the SGS scheme.  

To date, missing data in Bayesian networks is commonly treated by imputing the missing variables \citep{scanagatta2018efficient,scutari2020bayesian, ruggieri2020hard}. %Most prominent, the EM algorithm is commonly used for structure learing under missing data \citep{scanagatta2018efficient, ruggieri2020hard}. 
However, imputing missing variables has two main error sources. First, imputation naturally introduces sampling error, and second, many variables will be imputed that have no impact on the inferred network structure, as discussed in Section~\ref{sec:reducedNet}. Both error sources can be avoided by directly calculating the marginal probability of the observed data using the method developed here. Hence, our method may constitute the basis for accurate structure learning of Bayesian networks with missing data in future work.

\acks{The authors are grateful to acknowledge funding support for this work from the two Cantons of Basel through project grant PMB-02-18 granted by the ETH Zurich.}

% Manual newpage inserted to improve layout of sample file - not
% needed in general before appendices/bibliography.

%\newpage
%
%\appendix
%\section*{Appendix A.}
%\label{app:theorem}
%
%% Note: in this sample, the section number is hard-coded in. Following
%% proper LaTeX conventions, it should properly be coded as a reference:
%
%%In this appendix we prove the following theorem from
%%Section~\ref{sec:textree-generalization}:
%
%In this appendix we prove the following theorem from
%Section~6.2:
%
%\noindent
%{\bf Theorem} {\it Let $u,v,w$ be discrete variables such that $v, w$ do
%not co-occur with $u$ (i.e., $u\neq0\;\Rightarrow \;v=w=0$ in a given
%dataset $\dataset$). Let $N_{v0},N_{w0}$ be the number of data points for
%which $v=0, w=0$ respectively, and let $I_{uv},I_{uw}$ be the
%respective empirical mutual information values based on the sample
%$\dataset$. Then
%\[
%	N_{v0} \;>\; N_{w0}\;\;\Rightarrow\;\;I_{uv} \;\leq\;I_{uw}
%\]
%with equality only if $u$ is identically 0.} \hfill\BlackBox
%
%\noindent
%{\bf Proof}. We use the notation:
%\[
%P_v(i) \;=\;\frac{N_v^i}{N},\;\;\;i \neq 0;\;\;\;
%P_{v0}\;\equiv\;P_v(0)\; = \;1 - \sum_{i\neq 0}P_v(i).
%\]
%These values represent the (empirical) probabilities of $v$
%taking value $i\neq 0$ and 0 respectively.  Entropies will be denoted
%by $H$. We aim to show that $\fracpartial{I_{uv}}{P_{v0}} < 0$....\\

\newpage

\appendix
\section{Proof of Proposition~\ref{prop:relSub}} \label{proof:relSub}

\begin{proof}%[Proof of Proposition~\ref{prop:relSub}]
	Let $B$ represent the set of irrelevant nodes of the DAG $\mathcal{G}$ w.r.t.\ the nodes $e$, then the nodes of the relevant subgraph $\mathcal{G}'$ are given by $V \setminus B$. Starting at Equation~\ref{equ:normConstDef}, we can separate the summation over the irrelevant nodes, which normalizes to one.
	\begin{equation*}
	\begin{split}
	P_{\mathcal{G}}(X_e) = & \sum_{X_{V\setminus e}} \prod_{v \in V} P\big(X_v \mid X_{pa(v)}\big) \\
	= & \sum_{ X_{\{ V \setminus B \} \setminus e}} \prod_{v \in V\setminus B} P\big(X_v \mid X_{pa(v)}\big) \sum_{X_B} \prod_{b \in B} P\big(X_b \mid X_{pa(b)}\big)\\
	= & \sum_{X_{V\setminus \{e \cup B\}}} \prod_{v \in V\setminus B} P\big(X_v \mid X_{pa(v)}\big) \\
	= & P_{\mathcal{G}'}(X_e)
	\end{split}
	\end{equation*}
	\noindent
	This allows us to rewrite the equation as the summation over the relevant subgraph. 
\end{proof}  

\section{Proof of Proposition~\ref{prop:unique}} \label{proof:unique}

\begin{proof}%[Proof of Proposition~\ref{prop:subnet2}]
	\noindent	
	We prove Proposition~\ref{prop:unique} by contradiction. Assume that conditionally independent subsets are not unique w.r.t.\ a set of nodes $e$. This implies that for a DAG $\mathcal{G}=(V, E)$ with nodes $e$, there exists a node which can be assigned to two different conditionally independent subsets $S_1, S_2$, i.e. 
	\begin{equation*}
	\exists \, v_1 \in V: \big( ( v_1 \in S_1 ) \land (v_1 \in S_2) \big) \land \big( S_1 \neq S_2 \big)
	\end{equation*}
	If $S_1$ and $S_2$ are different, then there must be a node, which is contained in only one set 
	\begin{equation*}
	% \big( S_1 \neq S_2 \big) 
	\exists \, v_2 \in V:  \\
	\big( ( v_2 \in S_1 ) \land (v_2 \not\in S_2) \big) \lor \big( ( v_2 \not\in S_1 ) \land (v_2 \in S_2) \big)
	\end{equation*}
	We will show the contradiction for the case of ${( v_2 \in S_1 ) \land (v_2 \not\in S_2)}$, however it can analogously be shown for the alternative case of $(v_2 \not\in S_1 ) \land (v_2 \in S_2)$. 
	
	According to Definition~\ref{def:CIS}, $v_1$ and $v_2$ are d-connected since $S_1$ is a conditionally independent subset, i.e. 
	\begin{equation*}
	v_1, v_2 \in S_1 \Rightarrow v_1 \text{ and } v_2 \text{ are d-connected w.r.t.\ } e
	\end{equation*}
	At the same time Definition~\ref{def:CIS} yields the opposite for the conditionally independent subset $S_2$, i.e. 
	\begin{equation*}
	(v_1 \in S_2) \land (v_2 \not\in S_2) \Rightarrow v_1 \text{ and } v_2 \text{ are d-separated w.r.t.\ } e
	\end{equation*}
	The contradiction proves that conditionally independent subsets are unique w.r.t.\ a set of nodes $e$.
\end{proof}

\section{Proof of Proposition~\ref{prop:subnet}} \label{proof:subnet}

\begin{proof}%[Proof of Proposition~\ref{prop:subnet}]
	Building on Proposition~\ref{prop:relSub}, we consider the nodes of the relevant subgraph $\mathcal{G}'$ and divide them into conditionally independent subsets $\{S_1,...,S_N \}$. We then separate the marginalization by considering the evidence nodes which are part of the Markov blanket of $S_i$, i.e.\ \mbox{$e_i^{mb}= e \cap \{mb(u): u \in S_i \}$}, and the evidence nodes which are children of the nodes of $S_i$, i.e.\ \mbox{$e_i^{ch}= e \cap \{ch(u): u \in S_i \}$}. 
	\begin{equation*}
	\begin{split}
	P(X_e)=& \frac{P(X_{V'},X_e)}{P(X_{V'} \mid X_e)} =\frac{P(X_{S_1},...,X_{S_N},X_e)}{P(X_{S_1},...,X_{S_N} \mid X_e)} =\frac{P(X_{S_1},...,X_{S_N},X_e)}{\prod_{i=1}^n P(X_{S_i} \mid X_e)}\\
	= & P\big(X_{e'}\big)\prod_{i=1}^nP\Big(X_{S_i} \mid X_{e_i^{pa}}\Big) \frac{P\Big(X_{e_i^{ch}} \mid X_{S_i},X_{e_i^{mb}\setminus e_i^{ch}}\Big)}{P\Big(X_{S_i} \mid X_{e_i^{mb}}\Big)}
	\end{split}
	\end{equation*}
	where $V'=V\setminus e$ and $e' = e \setminus \{ e_i^{ch}  \forall i \}$. Note that $P\Big(X_{S_i} \mid X_{e_i^{pa}}\Big)=P\Big(X_{S_i} \mid X_{e_i^{mb}\setminus e_i^{ch}}\Big)$ as $e_i^{ch}$ is a collider that d-separates $S_i$ from $e_i^{mb}\setminus \{e_i^{ch},e_i^{mb}\}$. This allows us to simplify
	\begin{equation*}
	\begin{split}
	P(X_e) & =  P\big(X_{e'}\big)\prod_{i=1}^N\frac{P\Big(X_{S_i},X_{e_i^{ch}} \mid X_{e_i^{mb}\setminus e_i^{ch}}\Big)}{P\Big(X_{S_i} \mid X_{e_i^{mb}}\Big)}\\
	& = P\big(X_{e'}\big)\prod_{i=1}^N\frac{P\Big(X_{e_i^{ch}} \mid X_{e_i^{mb}\setminus e_i^{ch}}\Big)P\Big(X_{S_i} \mid X_{e_i^{mb}}\Big)}{P\Big(X_{S_i} \mid X_{e_i^{mb}}\Big)} \\
	&= P\big(X_{e'}\big)\prod_{i=1}^N P\Big(X_{e_i^{ch}} \mid X_{e_i^{mb}\setminus e_i^{ch}}\Big)\\
	&=P\big(X_{e'}\big)\prod_{i=1}^N \sum_{X_{S_i}} P\Big(X_{S_i}, X_{e_i^{ch}} \mid X_{e_i^{mb}\setminus e_i^{ch}}\Big)
	\end{split}
	\end{equation*}
	\vspace{-0.5cm}\end{proof}

\section{Proof of Proposition~\ref{prop:subnet2}} \label{proof:subnet2}

We can prove Proposition~\ref{prop:subnet2} starting from Proposition~\ref{prop:subnet}.
\begin{proof}[Proof of Proposition~\ref{prop:subnet2}]
	Let $\mathcal{G}'$ be the relevant subgraph of a DAG $\mathcal{G}$ w.r.t.\ a set of nodes $e$ and let $p_{\mathcal{G}'}$ and $p_{\mathcal{G}}$ be the respective probability distributions that satisfy the Markov properties. Let \mbox{$S_{\text{all}}=\{S_1,...,S_N \}$} be the conditionally independent subsets of the relevant subgraph. According to Proposition~\ref{prop:subnet}, the marginal probability distribution $P(X_e)$ can be calculated as
	\begin{equation*}
	P(X_e)=P(X_{e'})\prod_{S_i \in S_{\text{all}}} \sum_{X_{S_i}} P\Big(X_{S_i}, X_{e_i^{ch}} \mid X_{e_i^{mb}\setminus e_i^{ch}}\Big)
	\end{equation*}
	where $e_i^{mb}= e \cap \{mb(u): u \in S_i \}$, $mb(u)$ is the Markov blanket of node $u$, $e_i^{ch}= e \cap \{ch(u): u \in S_i \}$ and $e' = e \setminus \{ e_i^{ch}  \forall i \}$.
	
	We can separate the product of the individual calculations of all subgroups ${S_{\text{all}}=S_{\text{exact}}\cup S_{\text{approx}}}$ into their approximate and exact parts. Expressing the approximate part as an expectation leads to the proposed equation. 
	\begin{equation*}
	\begin{split}
	P(X_e)= P(X_{e'}) & \prod_{S_j \in S_{\text{all}}} \sum_{X_{S_j}} P\Big(X_{S_j},X_{e_j^{ch}} \mid X_{e_j^{mb}\setminus e_j^{ch}}\Big)\\
	= P(X_{e'}) & \prod_{S_j \in S_{\text{exact}}} \sum_{X_{S_j}} P\Big(X_{S_j},X_{e_j^{ch}} \mid X_{e_j^{mb}\setminus e_j^{ch}}\Big)\\ 
	&\hspace{-1.3cm} \prod_{S_i \in S_{\text{approx}}} \sum_{X_{S_i}} Q(X_{S_i})\frac{P\Big(X_{S_i},X_{e_i^{ch}} \mid X_{e_i^{mb}\setminus e_i^{ch}}\Big)}{Q(X_{S_i})}\\
	= P(X_{e'}) & \prod_{S_j \in S_{\text{exact}}} \sum_{X_{S_j}} P\Big(X_{S_j},X_{e_j^{ch}} \mid X_{e_j^{mb}\setminus e_j^{ch}}\Big)\\ 
	&\hspace{-1.3cm} \prod_{S_i \in S_{\text{approx}}} \mathbb{E}_{Q(X_{S_i})} \left[\rule{0cm}{0.89cm} \frac{P\Big(X_{S_i},X_{e_i^{ch}} \mid X_{e_i^{mb}\setminus e_i^{ch}}\Big)}{Q(X_{S_i})}\right]
	\end{split}
	\end{equation*}
	
	%	\begin{equation*}
	%	\begin{split}
	%	P(X_e)&= P(X_{e'})\prod_{S_j \in S_{\text{all}}} \sum_{X_{S_j}} P\Big(X_{S_j},X_{e_j^{ch}} \mid X_{e_j^{mb}\setminus e_j^{ch}}\Big)\\
	%	&=  P(X_{e'})\prod_{S_j \in S_{\text{exact}}} \sum_{X_{S_j}} P\Big(X_{S_j},X_{e_j^{ch}} \mid X_{e_j^{mb}\setminus e_j^{ch}}\Big) \prod_{S_i \in S_{\text{approx}}} \sum_{X_{S_i}} Q(X_{S_i})\frac{P\Big(X_{S_i},X_{e_i^{ch}} \mid X_{e_i^{mb}\setminus e_i^{ch}}\Big)}{Q(X_{S_i})}\\
	%	&= P(X_{e'})\prod_{S_j \in S_{\text{exact}}} \sum_{X_{S_j}} P\Big(X_{S_j},X_{e_j^{ch}} \mid X_{e_j^{mb}\setminus e_j^{ch}}\Big) \prod_{S_i \in S_{\text{approx}}} \mathbb{E}_{Q(X_{S_i})} \left[\rule{0cm}{0.9cm} \frac{P\Big(X_{S_i},X_{e_i^{ch}} \mid X_{e_i^{mb}\setminus e_i^{ch}}\Big)}{Q(X_{S_i})} \right]
	%	\end{split}
	%	\end{equation*}
\end{proof}

\section{Parameter Selection}\label{sec:appAlgo}

The maximal dimension up to which exact inference is performed in the subsets $n_{\text{max}}$ is tuned to the sampling time at hand. For each subset, the computational cost of approximate inference must not exceed the computational cost of exact inference, i.e.\ a longer sampling time allows for higher $n_{\text{max}}$. The parameter $n_{\text{max}}$ is set as the dimension of a subset at which the average time of exact inference exceeds the average time of approximate inference. Note that $n_{\text{max}}$ depends on the given sampling time per number of variables in the relevant subgraph. In our experiments, $n_{\text{max}}$ was set to $15$ for fast inference in the broad parameter range described above.

\section{Algorithm Complexity}\label{sec:appalgocomple}

The computational complexity of Algorithm~\ref{algo_disjdecomp} for finding the conditionally independent subsets is linear in the size of the largest conditionally independent subset. 
Hence, finding the conditionally independent subsets is computationally cheap as it has a linear complexity and works on the DAG space. 
The problem of marginalization was proven to be NP-hard for categorical variables \citep{cooper1990}. Hence, while our proposed decomposition can reduce the computational complexity, the problem remains NP-hard in general. In the limit of fully connected DAGs, the computational complexity can not be reduced by means of our proposed separation. 

For exact inference of the conditional independent subsets, we use the junction-tree algorithm, which has a computational complexity that is exponential in the size of the largest clique. 
In our proposed decomposition, the cliques of the junction-tree algorithm can differ from the cliques of the plain junction-tree algorithm. The number of variables which are included in each clique is lower or equal compared to the cliques of the plain junction-tree algorithm. Since the complexity of the junction-tree algorithm grows exponentially with the number of variables in each clique, this leads to a lower or equal complexity of our method. We did not pre-compute the clique potentials since for different evidence variables, the clique potentials may need to be partially recomputed. If the calculation of the clique potentials for the whole Bayesian network is computationally infeasible (as is the case in high-dimensional settings), the fact that the potentials are not pre-computed can be beneficial since only a fraction of variables are relevant for the considered sets of evidence nodes. In our benchmarks, we recomputed the potentials for every query, though a speedup of our method could be reached in future by partially reusing the unchanged clique potentials.

%For exact inference of the conditional independent subsets, we use the junction-tree algorithm. While the junction-tree algorithm is efficient in inferring the marginal probability distribution in low-dimensional networks, it struggles with high-dimensional and dense networks as the computational complexity is exponential in the size of the largest clique. 

\section{Additional Benchmark Information}\label{appendix:bench}

% Our algorithms were implemented using parts of the bnstruct package \citep{franzin2017bnstruct} and the BiDAG package \citep{suter2021bayesian}. The code is written in R and has core functions partially written in C.

We generated Barabási–Albert graphs, Watts-Strogatz graphs and Erdős–Rényi island graphs, according to the corresponding options in the \textbf{pcalg} package \citep{kalisch2012}. Further parameters have been varied as follows: numbers of variables $n\in\{50, 100, 150, 200\}$, average Markov blanket sizes $S\in\{2, 3, 4,5\}$, category sizes $C\in\{2, 4, 6, 8\}$ and fraction of evidence variables ${f\in\{0.2,0.4,0.6,0.8\}}$. 

Figure~\ref{fig:benchmarkALL} shows the log NRMSE of the different inference schemes benchmarked over a range of different dimensions, graph types, category sizes, fractions of evidence variables and Markov blanket sizes. The figure shows, that the benchmark results are consistent over a broad range of Bayesian networks.

\begin{figure}
	\begin{subfigure}{.5\textwidth}
		\centering
		\includegraphics[width=0.95\linewidth]{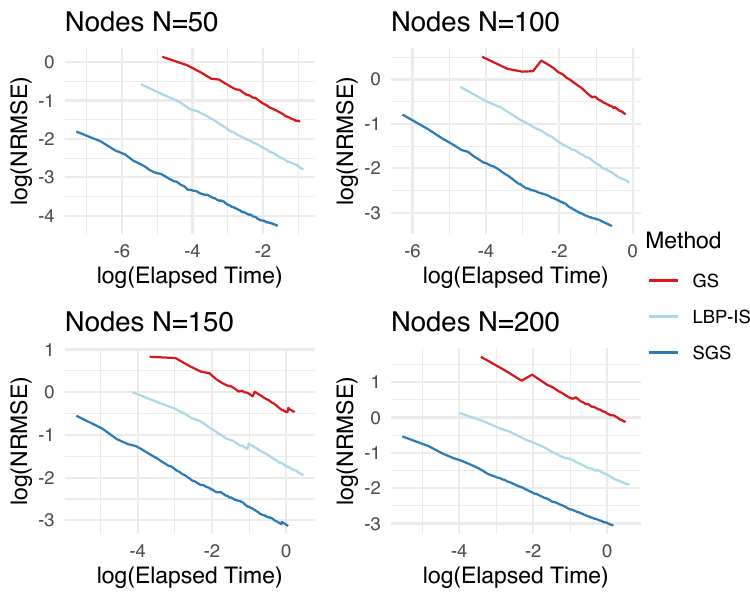}
		\caption{Varying dimensions}
		\label{fig:sfig1}
	\end{subfigure}%
	\begin{subfigure}{.5\textwidth}
		\centering
		\includegraphics[width=.95\linewidth]{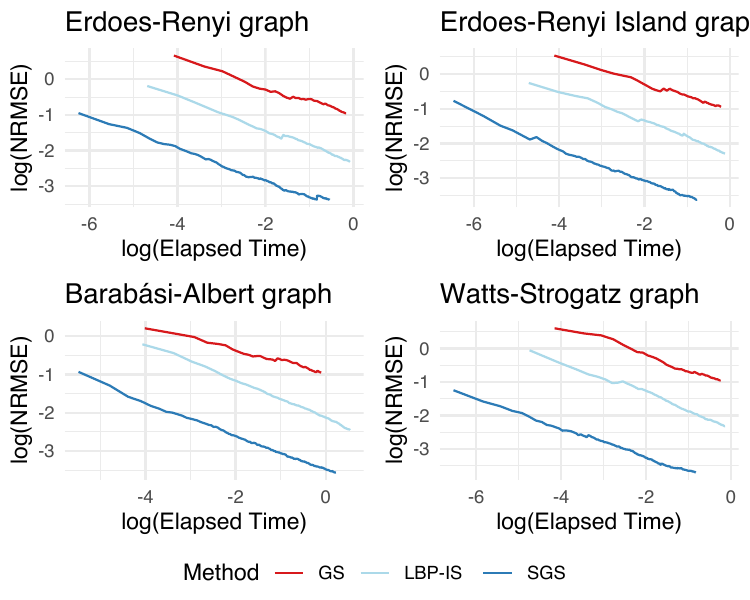}
		\caption{Varying graph types}
		\label{fig:sfig2}
	\end{subfigure}%
	\vspace{0.2cm}
	\begin{subfigure}{.5\textwidth}
		\centering
		\includegraphics[width=.95\linewidth]{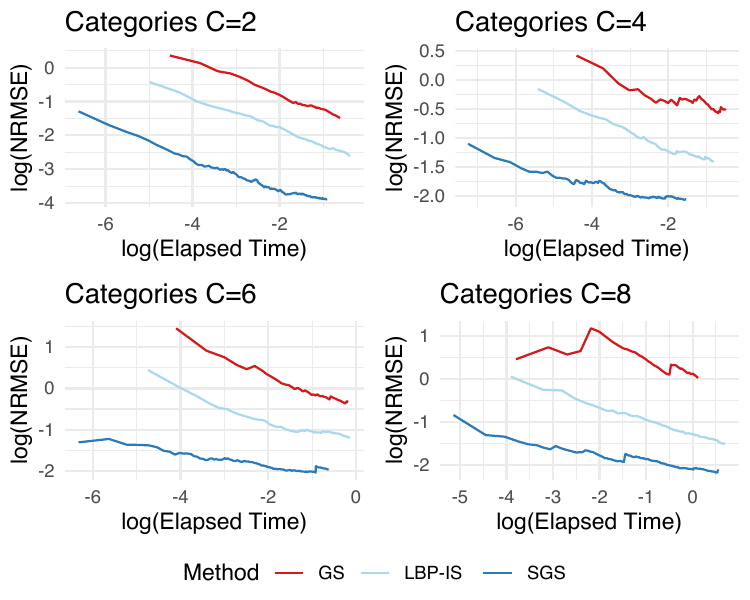}
		\caption{Varying category sizes}
		\label{fig:sfig2}
	\end{subfigure}%
	\begin{subfigure}{.5\textwidth}
		\centering
		\includegraphics[width=.95\linewidth]{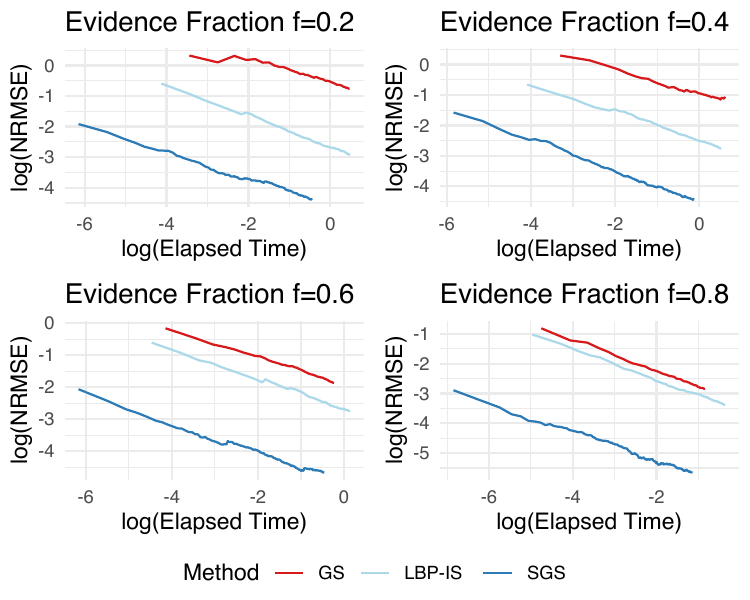}
		\caption{Varying fractions of evidence variables}
		\label{fig:sfig2}
	\end{subfigure}%
	
	\vspace{0.2cm}
	\centering
	\begin{subfigure}{.5\textwidth}
		\centering
		\includegraphics[width=.95\linewidth]{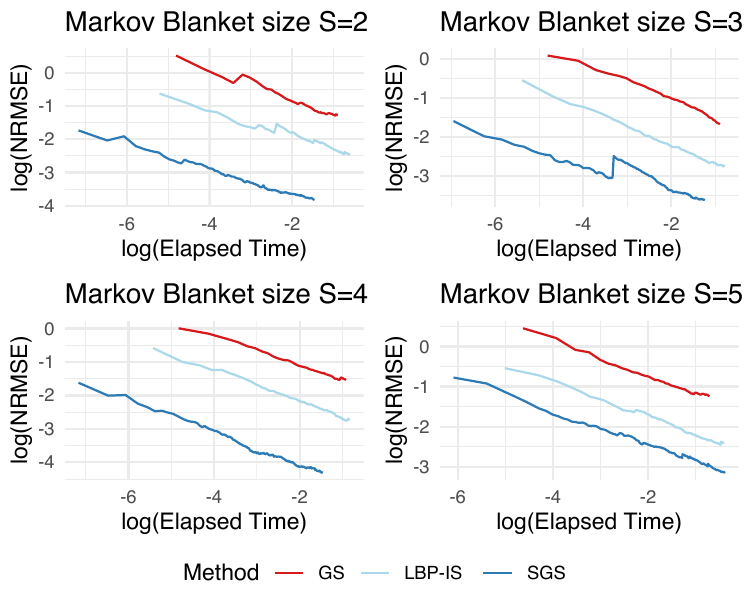}
		\caption{Varying Markov blanket sizes}
		\label{fig:sfig2}
	\end{subfigure}
	\caption{Log-log plot of the performance of different sampling schemes over a range of different dimensions, graph types, category sizes, fractions of evidence variables and Markov blanket sizes. 
		GS: Gibbs sampling, LBP-IS: loopy belief propagation with subsequent importance weighting, SGS: subgroup separation.}
	\label{fig:benchmarkALL}
\end{figure}

\begin{figure}
	\begin{subfigure}{.5\textwidth}
		\centering
		\includegraphics[width=0.95\linewidth]{figures/benchmark/FigureDims2.pdf}
		\caption{Varying dimensions}
		\label{fig:sfig12}
	\end{subfigure}%
	\begin{subfigure}{.5\textwidth}
		\centering
		\includegraphics[width=.95\linewidth]{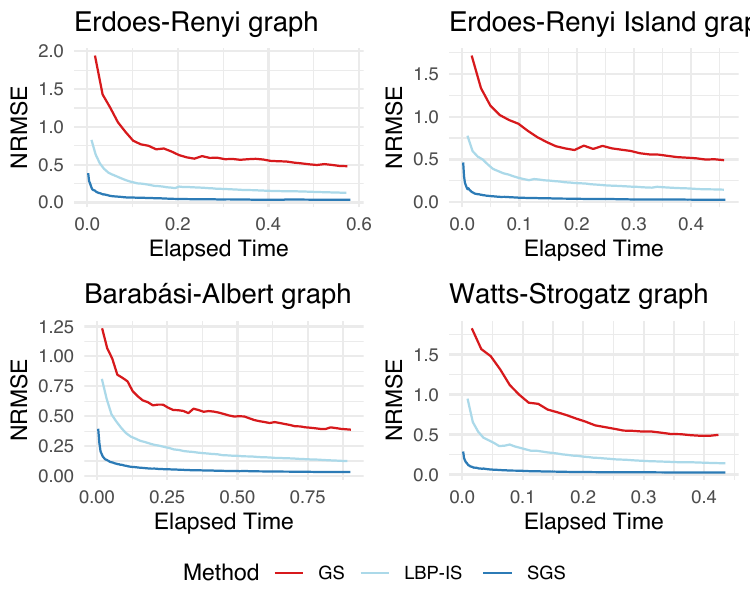}
		\caption{Varying graph types}
		\label{fig:sfig22}
	\end{subfigure}%
	\vspace{0.2cm}
	\begin{subfigure}{.5\textwidth}
		\centering
		\includegraphics[width=.95\linewidth]{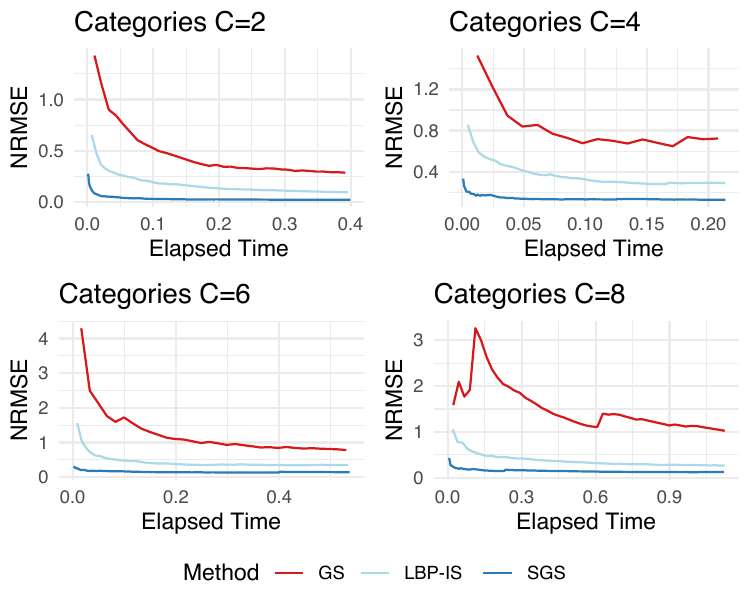}
		\caption{Varying category sizes}
		\label{fig:sfig32}
	\end{subfigure}%
	\begin{subfigure}{.5\textwidth}
		\centering
		\includegraphics[width=.95\linewidth]{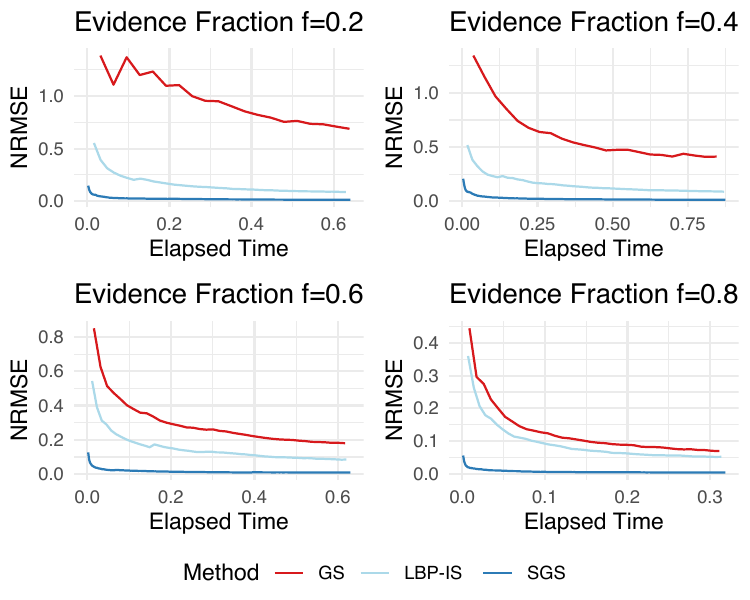}
		\caption{Varying fractions of evidence variables}
		\label{fig:sfig42}
	\end{subfigure}%
	
	\vspace{0.2cm}
	\centering
	\begin{subfigure}{.5\textwidth}
		\centering
		\includegraphics[width=.95\linewidth]{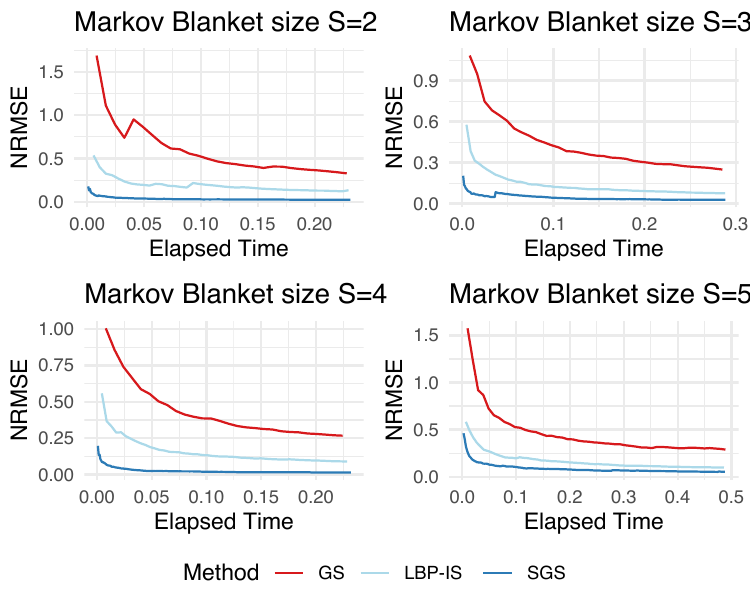}
		\caption{Varying Markov blanket sizes}
		\label{fig:sfig52}
	\end{subfigure}
	\caption{NRMSE over time of different sampling schemes over a range of different dimensions, graph types, category sizes, fractions of evidence variables and Markov blanket sizes. 
		GS: Gibbs sampling, LBP-IS: loopy belief propagation with subsequent importance weighting, SGS: subgroup separation.}
	\label{fig:benchmarkALL2}
\end{figure}

As the separation into conditionally independent subsets is highly dependent on the DAG structure, a separate benchmark included simulations of Barabási–Albert graphs, Watts-Strogatz graphs and Erdős–Rényi island graphs, generated according to the corresponding options in the \textbf{pcalg} package \citep{kalisch2012}. We find the SGS scheme to be an efficient method across all tested graph types.

Besides binary variables, different category sizes have been benchmarked in the range of $C\in\{2, 4, 6, 8\}$, showing that our SGS scheme increases the accuracy also for non-binary variables (see Figure~\ref{fig:benchmarkALL}\,(c)). This is expected as the conditionally independent subsets are independent of the category size in the Bayesian network. 

The evidence nodes $e$ were selected at random as a fraction $f\in\{0.2,0.4,0.6,0.8\}$ of the total number of variables. Besides the graph structure, the fraction of evidence $f$ plays a crucial role in forming conditionally independent subsets. Particularly high or low fractions of evidence increase the number of conditionally independent subsets, while decreasing the dimensionality within each subset. Consequently, the SGS scheme proves to be highly efficient in these settings. The sparsity of the DAGs has been varied such that the average Markov blanket size was $S\in\{2, 3, 4,5\}$, thus covering the range of sparse to dense networks. 

All computations were performed on performed on one CPU core of the AMD EPYC 7H12 processor (2.6 GHz nominal, 3.3 GHz peak) and 256 GB of DDR4 memory clocked at 3200 MHz. Code was partially used from the \textbf{bnstruct} package (GPL-3) \citep{franzin2017bnstruct}. % and structure \textbf{Bestie} package, and the \textbf{BiDAG} package \citep{suter2021bayesian,kuipers2022efficient}. 

\section{Application}\label{sec:appappl}

Bayesian networks can be employed to model the probabilistic relationships in biological data and can further be used to cluster mutational profiles to identify novel cancer subtypes \citep{kuipers2018}. However, biological data is often affected by missing information, which limits the application of Bayesian networks in this area.  As an immediate application of the proposed method, we demonstrate how the marginal probability distribution can be used to account for missing data in the classification of cancer subtypes. In particular, we employ Bayesian networks to identify the cancer subtype of a patient from the observed mutational pattern of their tumor. We considered a dataset taken from a Korean population study \citep{suh2020}, including mutational profiles of 25 patient samples diagnosed with renal cell carcinoma (RCC), which is the most common type of kidney cancer in adults \citep{hsieh2017}. The samples consist of the two most frequent histological subtypes, namely clear cell RCC (ccRCC) and papillary RCC (pRCC). While the real cancer subtype was known in our data and used in order to evaluate the accuracy of our prediction, the histological subtypes may not need to match molecular subtypes which can be employed to cluster patients based on their mutational profile \citep{kuipers2018}. 

% To illustrate the calculation of the normalizing constant, we employ Bayesian networks to identify the cancer type of a patient from the observed mutational pattern of its tumor in small cohorts by using a large database as background information. We considered a dataset taken from a Korean population study \citep{suh2020}, including mutational profiles of 25 patient samples diagnosed with renal cell carcinoma (RCC), which is the most common type of kidney cancer in adults \citep{hsieh2017}. The samples consist of the two most frequent histological subtypes, namely clear cell RCC (ccRCC) and papillary RCC (pRCC), which we tried to identify using supervised Bayesian network clustering. 

\begin{figure*}[ht!]
	\centering
	\subfloat[\centering  Difference with/without Marginalization]
	{{\vspace{0.11cm}\input{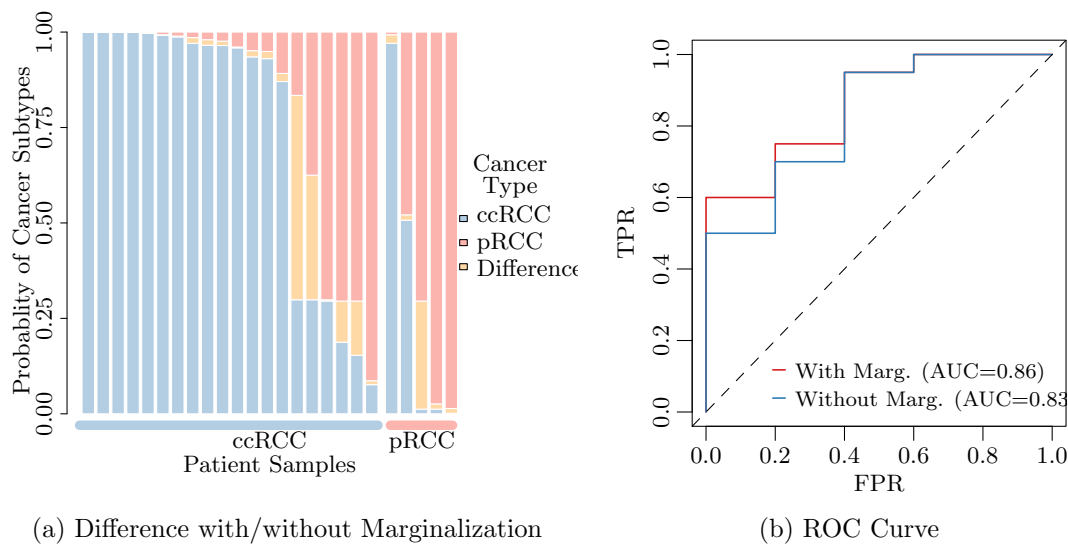} }}
	%\qquad
	\subfloat[\centering ROC Curve]
	{{\vspace{-0.95cm}% Created by tikzDevice version 0.12.3.1 on 2021-12-13 23:41:39
% !TEX encoding = UTF-8 Unicode
\begin{tikzpicture}[x=0.945pt,y=0.89pt]
\definecolor{fillColor}{RGB}{255,255,255}
\path[use as bounding box,fill=fillColor,fill opacity=0.00] (0,0) rectangle (224.04,254.63);
\begin{scope}
\path[clip] ( 49.20, 61.20) rectangle (198.84,225.43);
\definecolor{drawColor}{RGB}{215,25,28}

\path[draw=drawColor,line width= 0.6pt,line join=round,line cap=round] (193.29,219.34) --
	(137.87,219.34) --
	(137.87,211.74) --
	(110.16,211.74) --
	(110.16,211.74) --
	(110.16,211.74) --
	(110.16,204.14) --
	(110.16,204.14) --
	(110.16,204.14) --
	(110.16,204.14) --
	(110.16,204.14) --
	(110.16,204.14) --
	(110.16,181.33) --
	(110.16,181.33) --
	(110.16,181.33) --
	(110.16,181.33) --
	(110.16,181.33) --
	(110.16,181.33) --
	(110.16,181.33) --
	( 82.45,181.33) --
	( 82.45,181.33) --
	( 82.45,181.33) --
	( 82.45,173.73) --
	( 82.45,173.73) --
	( 82.45,173.73) --
	( 82.45,173.73) --
	( 82.45,173.73) --
	( 82.45,173.73) --
	( 82.45,173.73) --
	( 82.45,173.73) --
	( 82.45,173.73) --
	( 82.45,173.73) --
	( 82.45,166.12) --
	( 82.45,166.12) --
	( 82.45,166.12) --
	( 82.45,166.12) --
	( 82.45,158.52) --
	( 54.74,158.52) --
	( 54.74, 67.28);
\end{scope}
\begin{scope}
\path[clip] (  0.00,  0.00) rectangle (224.04,274.63);
\definecolor{drawColor}{RGB}{0,0,0}

\path[draw=drawColor,line width= 0.4pt,line join=round,line cap=round] ( 54.74, 61.20) -- (193.29, 61.20);

\path[draw=drawColor,line width= 0.4pt,line join=round,line cap=round] ( 54.74, 61.20) -- ( 54.74, 55.20);

\path[draw=drawColor,line width= 0.4pt,line join=round,line cap=round] ( 82.45, 61.20) -- ( 82.45, 55.20);

\path[draw=drawColor,line width= 0.4pt,line join=round,line cap=round] (110.16, 61.20) -- (110.16, 55.20);

\path[draw=drawColor,line width= 0.4pt,line join=round,line cap=round] (137.87, 61.20) -- (137.87, 55.20);

\path[draw=drawColor,line width= 0.4pt,line join=round,line cap=round] (165.58, 61.20) -- (165.58, 55.20);

\path[draw=drawColor,line width= 0.4pt,line join=round,line cap=round] (193.29, 61.20) -- (193.29, 55.20);

\node[text=drawColor,anchor=base,inner sep=0pt, outer sep=0pt, scale=  1.00] at ( 54.74, 45.60) {\footnotesize 0.0};

\node[text=drawColor,anchor=base,inner sep=0pt, outer sep=0pt, scale=  1.00] at ( 82.45, 45.60) {\footnotesize 0.2};

\node[text=drawColor,anchor=base,inner sep=0pt, outer sep=0pt, scale=  1.00] at (110.16, 45.60) {\footnotesize 0.4};

\node[text=drawColor,anchor=base,inner sep=0pt, outer sep=0pt, scale=  1.00] at (137.87, 45.60) {\footnotesize 0.6};

\node[text=drawColor,anchor=base,inner sep=0pt, outer sep=0pt, scale=  1.00] at (165.58, 45.60) {\footnotesize 0.8};

\node[text=drawColor,anchor=base,inner sep=0pt, outer sep=0pt, scale=  1.00] at (193.29, 45.60) {\footnotesize 1.0};

\path[draw=drawColor,line width= 0.4pt,line join=round,line cap=round] ( 49.20, 67.28) -- ( 49.20,219.34);

\path[draw=drawColor,line width= 0.4pt,line join=round,line cap=round] ( 49.20, 67.28) -- ( 43.20, 67.28);

\path[draw=drawColor,line width= 0.4pt,line join=round,line cap=round] ( 49.20, 97.69) -- ( 43.20, 97.69);

\path[draw=drawColor,line width= 0.4pt,line join=round,line cap=round] ( 49.20,128.11) -- ( 43.20,128.11);

\path[draw=drawColor,line width= 0.4pt,line join=round,line cap=round] ( 49.20,158.52) -- ( 43.20,158.52);

\path[draw=drawColor,line width= 0.4pt,line join=round,line cap=round] ( 49.20,188.93) -- ( 43.20,188.93);

\path[draw=drawColor,line width= 0.4pt,line join=round,line cap=round] ( 49.20,219.34) -- ( 43.20,219.34);

\node[text=drawColor,rotate= 90.00,anchor=base,inner sep=0pt, outer sep=0pt, scale=  1.00] at ( 39.80, 67.28) {\footnotesize 0.0};

\node[text=drawColor,rotate= 90.00,anchor=base,inner sep=0pt, outer sep=0pt, scale=  1.00] at ( 39.80, 97.69) {\footnotesize 0.2};

\node[text=drawColor,rotate= 90.00,anchor=base,inner sep=0pt, outer sep=0pt, scale=  1.00] at ( 39.80,128.11) {\footnotesize 0.4};

\node[text=drawColor,rotate= 90.00,anchor=base,inner sep=0pt, outer sep=0pt, scale=  1.00] at ( 39.80,158.52) {\footnotesize 0.6};

\node[text=drawColor,rotate= 90.00,anchor=base,inner sep=0pt, outer sep=0pt, scale=  1.00] at ( 39.80,188.93) {\footnotesize 0.8};

\node[text=drawColor,rotate= 90.00,anchor=base,inner sep=0pt, outer sep=0pt, scale=  1.00] at ( 39.80,219.34) {\footnotesize 1.0};

\path[draw=drawColor,line width= 0.4pt,line join=round,line cap=round] ( 49.20, 61.20) --
	(198.84, 61.20) --
	(198.84,225.43) --
	( 49.20,225.43) --
	( 49.20, 61.20);
\end{scope}
\begin{scope}
\path[clip] (  0.00,  0.00) rectangle (224.04,274.63);
\definecolor{drawColor}{RGB}{0,0,0}

\node[text=drawColor,anchor=base,inner sep=0pt, outer sep=0pt, scale=  1.20] at (124.02,239.89) {};%\footnotesize ROC Curve};

\node[text=drawColor,anchor=base,inner sep=0pt, outer sep=0pt, scale=  1.00] at (124.02, 33.60) {\footnotesize FPR};

\node[text=drawColor,rotate= 90.00,anchor=base,inner sep=0pt, outer sep=0pt, scale=  1.00] at ( 25.80,143.31) {\footnotesize TPR};
\end{scope}
\begin{scope}
\path[clip] ( 49.20, 61.20) rectangle (198.84,225.43);
\definecolor{drawColor}{RGB}{44,123,182}

\path[draw=drawColor,line width= 0.6pt,line join=round,line cap=round] (193.29,219.34) --
	(137.87,219.34) --
	(137.87,211.74) --
	(110.16,211.74) --
	(110.16,211.74) --
	(110.16,211.74) --
	(110.16,204.14) --
	(110.16,204.14) --
	(110.16,196.53) --
	(110.16,196.53) --
	(110.16,196.53) --
	(110.16,196.53) --
	(110.16,173.73) --
	(110.16,173.73) --
	(110.16,173.73) --
	(110.16,173.73) --
	(110.16,173.73) --
	(110.16,173.73) --
	(110.16,173.73) --
	( 82.45,173.73) --
	( 82.45,173.73) --
	( 82.45,173.73) --
	( 82.45,173.73) --
	( 82.45,173.73) --
	( 82.45,173.73) --
	( 82.45,173.73) --
	( 82.45,173.73) --
	( 82.45,173.73) --
	( 82.45,173.73) --
	( 82.45,173.73) --
	( 82.45,173.73) --
	( 82.45,173.73) --
	( 82.45,173.73) --
	( 82.45,173.73) --
	( 82.45,166.12) --
	( 82.45,166.12) --
	( 82.45,143.31) --
	( 54.74,143.31) --
	( 54.74, 67.28);
\definecolor{drawColor}{RGB}{0,0,0}

\path[draw=drawColor,line width= 0.4pt,dash pattern=on 4pt off 4pt ,line join=round,line cap=round] ( 49.20, 61.20) -- (198.84,225.43);

\path[] ( 49.45, 97.20) rectangle (198.84, 61.20);
\definecolor{drawColor}{RGB}{215,25,28}

\path[draw=drawColor,line width= 0.6pt,line join=round,line cap=round] ( 81.45, 85.20) -- ( 86.45, 85.20);
%\path[draw=drawColor,line width= 0.6pt,line join=round,line cap=round] ( 88.45, 95.20) -- ( 96.45, 95.20);
\definecolor{drawColor}{RGB}{44,123,182}

\path[draw=drawColor,line width= 0.6pt,line join=round,line cap=round] ( 81.45, 73.20) -- ( 86.45, 73.20);
%\path[draw=drawColor,line width= 0.6pt,line join=round,line cap=round] ( 88.45, 73.20) -- ( 96.45, 73.20);
\definecolor{drawColor}{RGB}{0,0,0}

\node[text=drawColor,anchor=base west,inner sep=0pt, outer sep=0pt, scale=  1.00] at ( 90.45, 81.76) {\scriptsize With Marg. (AUC=0.86)};

%\node[text=drawColor,anchor=base west,inner sep=0pt, outer sep=0pt, scale=  1.00] at ( 105.45, 91.76) {\scriptsize With Marginalization};

%\node[text=drawColor,anchor=base west,inner sep=0pt, outer sep=0pt, scale=  1.00] at ( 105.45, 81.76) {\scriptsize (AUC=0.79)};

\node[text=drawColor,anchor=base west,inner sep=0pt, outer sep=0pt, scale=  1.00] at ( 90.45, 69.76) {\scriptsize Without Marg. (AUC=0.83)};
\end{scope}
\end{tikzpicture}}}%
	\caption{Normalized probability of each cancer subtype if missing genes are excluded from the analysis (left) and ROC curve (right). The patient samples are sorted by probability and grouped by subtype.}%
	\label{fig:BarPlotRed}%
\end{figure*}

To determine the cancer subtype of the Korean population study, we classified their mutational profile against models learned on data from the TCGA database \citep{cancer2008}, as illustrated in Figure~\ref{fig:notRel}. On the TCGA data, we performed supervised Bayesian network modelling for the two subtypes ccRCC and pRCC, i.e., two individual Bayesian networks were learned from the patient samples for the two subtypes. The analysis can be split into two main parts:
% Given that the small number of 25 samples in the Korean population study is not sufficient to robustly learn Bayesian networks, we included data of 758 patient samples of the same subtypes from TCGA \citep{cancer2008} into our analysis. On the TCGA data, we performed supervised Bayesian network clustering for the two subtypes ccRCC and pRCC, i.e., two individual Bayesian networks were learned from the patient samples for the two subtypes. Using the clusters learned on the TCGA dataset as background information, we classified the samples of the Korean population study to dertermine the respective cancer type. The analysis can be split into two main parts:
\begin{enumerate}
	\item Supervised Bayesian network learning on the TCGA data (785 patient samples; 476 ccRCC, 282 pRCC)
	\item Classification of the Korean population study data (25 patient samples; 20 ccRCC, 5 pRCC)
\end{enumerate}
In the first step, for the TCGA dataset, we considered the 70 most significantly mutated genes ($q < 0.1$), of which only 26 were also observed in the panel of the Korean population study. The structure learning parameters (prior pseudo counts of the BDe score and edge penalization) were tuned by splitting the TCGA data into a training and a test dataset consisting of 80\% and 20\% of the samples, learning on the training data, and maximizing the accuracy of the classification of the test patient samples. 

In the second step, the patient samples of the Korean population study were classified into the two clusters learned from the TCGA data. Given that only 26 of the 70 genes from the learned Bayesian networks were observed in the Korean population study, the marginal probability distribution had to be calculated to account for the missing variables and classify the patient samples into each subtype.

% This represents the case, where missing data is handled using inference.
%\begin{figure}[ht]
%	%Do not try to scale figure in .tex or you loose font size consistency
%	\centering
%	%The code to input the plot is extremely simple
%	\input{figures/BarPlot_inference2.tex}
%	%Captions and Labels can be used since this is a figure environment
%	\caption{Normalized probability of each cancer subtype, namely clear cell RCC (ccRCC) and papillary RCC (pRCC), calculated using the marginal probability distribution. The patient samples are sorted by probability and grouped by subtype.}
%	\label{fig:BarPlotMain}
%\end{figure}

%While there are various other methods that can be used for classification, a general comparison of these methods is beyond the scope of this application. Instead, we show how missing data can be handled in the application of Bayesian networks to biological data analysis. 

In this application, the missing variables can also be simply excluded from the analysis since 26 of the genes are mutual between both panels. Note that this is often not possible if data is missing at random, which leads to a small fraction of mutually observed variables; in such cases, marginalization is the only option for handling the missing variables. To quantify the impact of the unobserved variables on the classification, the whole analysis was repeated considering only the 26 mutual genes in both steps. Hence, instead of considering the whole set of 70 genes from TCGA, only the mutual set of 26 genes were used to learn the Bayesian network models. In summary, the analysis has been performed in the following two ways:
\begin{enumerate}
	\item With marginalization: Learning on 70 genes (TCGA), classification 26 genes (Korean population study), marginalization over the remaining 44 genes
	\item Without marginalization: Learning on 26 genes (TCGA), classification 26 genes (Korean population study)
\end{enumerate}

% , i.e. the Bayesian network clusters were learned for the 26 mutual genes instead of the whole set of 70 genes. %This represents the case, where missing data is handled by neglecting the respective variable.

% The panels of both cohorts include 184 and 70 genes for the Korean population study and TCGA, respectively; 26 genes are overlapping between both panels.

% On the TCGA data, we performed supervised Bayesian network clustering for the two subtypes ccRCC and pRCC, i.e. two individual Bayesian networks were learned from the patient samples of both subtypes. To avoid overfitting, the data was split into learning and test data consisting of 80\% and 20\% of the samples, respectively. In the learning data, the DAG search parameters $chi$ and $edgepf$ were tuned to maximize the accuracy of the clustering of the patient samples into both subtypes. The parameters were confirmed to be the most accurate ones in the clustering of the test data. 

% In addition, the supervised Bayesian network clustering was repeated, only including the mutual genes of both panels. 
\noindent
In the classification step, the probabilities $P(X_e|\mathcal{B}_i)$ of the patient samples $X_e$ to correspond to the Bayesian network models $\mathcal{B}_i$ with $i \in \{\text{ccRCC, pRCC}\}$ were calculated using the SGS scheme decribed in Algorithm~\ref{algorithmSGS}. Figure~\ref{fig:notRel} shows the normalized probabilities of each cancer subtype for the 20 ccRCC (left) and the 5 pRCC (right) patient samples. The patient samples were assigned to the cancer subtype with the highest probability. 

When we used marginalization to account for the missing genes in the classification of the patient samples from the Korean population study, \SI{76}{\percent} of the samples were assigned to the correct cancer type. In contrast, \SI{68}{\percent} of the samples were assigned to the correct cancer type if the missing genes were completely excluded from the analysis. As a reference,  in the classification of the TCGA test data with the full set of genes, \SI{83}{\percent} of the samples were assigned to the correct cancer type, which can be considered an upper bound. Comparing the assigned probabilities of the two analyses shows that the probability is closer to the correct cancer subtype if marginalization is used to account for the missing genes (Figure~\ref{fig:BarPlotRed}a), leading to a higher fraction of patient samples to be assigned to the correct cancer subtype. This is reflected in the ROC curve (Figure~\ref{fig:BarPlotRed}b) and the corresponding increase in AUC when we use the marginalisation enabled by our method.%; the differences in the assigned probabilities are displayed in Figure~\ref{fig:BarPlotRed}\,(right). 

% 83 \% if all nodes would be acquired
% 76 \% with the inference
% 68 \% without the background nodes

%19 of the 25 were assigned to the correct cancer type. 
%17 of the 25 were assigned to the correct cancer type 

% We in- clude the most significantly mutated genes (q < 0.1) from both cohorts

% The two most common subtypes 75 \% ccRCC and 10 \% pRCC \citep{lopez2009}.

% RCC is the most common type of kidney cancer in adults > 90 \% \citep{hsieh2017}

%\begin{figure}%
%    \centering
%    \subfloat%[\centering  Missing genes excluded from analysis]
%    {{\input{figures/BarPlot_reduced2} }}
%%    \qquad
%    \subfloat%[\centering Difference between both analysis]
%    {{\input{figures/BarPlot_difference} }}%
%    \caption{Normalized probability of each cancer subtype if missing genes are %excluded from the analysis (left) and the difference compared to the analysis that %uses the normalizing constant (N.C.) to account for the missing genes (right). The %patient samples are sorted by probability and grouped by subtype.}%
%    \label{fig:BarPlotRed}%
%\end{figure}

\section{Inference Schemes in Subgroup Separation}

\subsection{Junction Tree in Subgroup Separation} \label{appendix:junctionTree}

% The junction tree algorithm allows for inference in Bayesian networks at minimal computational cost and is therefore selected for inference on the subgroups $S_{\text{exact}}$. 

After splitting the network into subgroups, the ones selected for exact inference $S_{\text{exact}}$ are processed with the junction tree algorithm. The basic idea of the junction tree algorithm is to cluster the nodes of the original network into cliques of nodes in order to remove the loops in the Bayesian network. The cliques form a rooted tree, called chordal graph, and are associated with corresponding clique potentials. 
To obtain the chordal graph, the DAG is moralized and subsequently triangulated to enable simple calculation of the marginals (further details on moralization and triangulation can be found, for example, in \cite{hojsgaard2012b}).

For each subgroup $S_i$, consider a subgraph $\mathcal{G}_{V_i}$ constructed on the nodes ${V_i=S_i \cup e_i^{mb}}$, and let $B_{V_i}$ be the corresponding Bayesian network. Let $(C_1,...,C_T)$ be the chordal graph of $\mathcal{G}_{V_i}$, then the clique potentials $\psi_{C_i}(X_{C_i})$ can be obtained by multiplying the CPTs of the individual nodes of the network. This summarizes the CPTs of the clique nodes $\prod_{v \in C_j} P(X_v \mid X_{pa(v)})$ into single potentials $\psi_{C_i}(X_{C_i})$, enabling us to write the probability distribution of $X_V$ as

\begin{equation}
P(X_V)=\prod_{1} \psi_{C_i}(X_{C_i})
\end{equation}

The evidence is incorporated by setting the values in the clique potentials $\psi_{C_i}(X_{C_i})$ to zero that are not identical to the values of the evidence nodes $e_i^{mb}$. In order to obtain the marginal probability distribution of each conditional independent subset ${P\Big(X_{e_i^{ch}} \mid X_{e_i^{mb}\setminus e_i^{ch}}\Big)}$ (rather than $P\left(X_{e_i^{mb}}\right)$), the values in the clique potentials that correspond to $e_i^{mb} \setminus e_i^{ch}$ are set to one. 

Let $C_r$ be a root of the chordal graph (this is not unique). We define messages that propagate from the leaves to the root of the chordal graph. The messages in \textit{sum-product message passing} are defined as

\begin{equation}
\delta_{i\rightarrow j} =\sum_{C_i\setminus S_{i,j}} \psi_{C_i} \cdot \prod_{k \in nb(i) \setminus j} \delta_{k\rightarrow i}
\end{equation}
where $S_{i,j}=C_i\cap C_j$ is called the sepset between $C_i$ and $C_j$ and $nb(i)$ are the neighbours of clique $C_i$. Prior to propagation, all messages are initialized to one.

The messages propagate upstream in the chordal graph to the root until every clique has passed its message apart from the root clique $C_r$. The \textit{belief} $\beta_{C_r}$ of the root clique is then given by

\begin{equation}
\beta_{C_r}=\psi_{C_r}\prod_{k \in nb(r)} \delta_{k\rightarrow r}
\end{equation}

The normalized belief gives the exact marginal probability distribution of clique $C_r$ given the evidence. Finally, the desired probability distribution of the subset is then given by the sum of the roots' belief $\beta_{C_r}$

\begin{equation}
P\Big(X_{e_i^{ch}} \mid X_{e_i^{mb}\setminus e_i^{ch}}\Big) = \sum_{C_r} \beta_{C_r}
\end{equation}

While the junction tree algorithm is an efficient method for exact inference, its runtime is exponential in the size of the largest clique. Hence, its application will not be possible in all subsets due to its computational cost.

\subsection{Importance Sampling in Subgroup Separation} \label{appendix:importanceSampl}

For the subsets $S_{\text{approx}}$ that are not feasible for the junction tree algorithm, an estimate needs to be found by using approximate inference. In order to find an unbiased estimator, importance sampling is used for approximating $P\Big(X_{e_i^{ch}} \mid X_{e_i^{mb}\setminus e_i^{ch}}\Big)$. The performance of importance sampling is largely dependent on the chosen importance distribution, which can be efficiently computed using variational inference \citep{yuan2006, li2017}. While loopy belief propagation is not suitable for finding the unbiased probability distribution of multiple variables given the evidence (as discussed in Section~\ref{sec:MessagePassing}), it has been shown to yield an efficient importance function that is cheap to compute \citep{yuan2006}.

\textit{Loopy} belief propagation represents an iterative application of the belief propagation algorithm; it is applied to graphs with loops. Belief propagation can be regarded as a special case of the junction tree algorithm, where messages are sent between individual nodes rather than cliques. Messages sent from children to parents are defined as
\begin{equation}
\delta_{a\rightarrow i} (X_i) = \sum_{X_a \setminus X_i} P_a(X_a) \prod_{j\in nb(a)\setminus i}  \delta_{j\rightarrow a} (X_j)
\end{equation}
whereas messages sent from parents to children are defined as
\begin{equation}
\delta_{i\rightarrow a} (X_i) = \prod_{c\in nb(i)\setminus a}  \delta_{c\rightarrow i} (X_i)
\end{equation}
where $nb(i)$ are the neighbouring nodes of node $i$ and all messages have one as an initial value.% , $\delta_{i\rightarrow j} = 1, \forall i,j \in V$.

The messages are propagated iteratively until their beliefs do not change significantly, commonly referred to as convergence. However, convergence is not generally guaranteed, as the beliefs can end up in oscillations resulting in endless repetitions. While there exist methods for handling the oscillations, they are not necessary in the estimation of an importance function, since a perfect solution is not required due to the subsequent correction with importance sampling. \cite{yuan2006} show that few iterations of LBP algorithm are sufficient for gaining an efficient importance distribution, defined by the beliefs of each node.

By drawing samples from the importance distribution $Q(X_{S_j})$ for each subset $S_j$, the probability distribution of the subset can be estimated according to

%\begin{equation}
%P(e_i^{ch} \mid e_i^{mb})=\mathbb{E}_{Q(S_j)}\bigg[ \frac{P(S_j \mid e_j^{mb}) P(e_j^{ch} \mid S_j)}{ Q(S_j)} \bigg]
%\end{equation}

\begin{equation}
P\Big(X_{e_i^{ch}} \mid X_{e_i^{mb}\setminus e_i^{ch}}\Big)=\frac{1}{M} \sum_{m=1}^M  \frac{P\Big(X_{S_j}[m] \mid X_{e_j^{mb}\setminus e_i^{ch}}\Big) P\Big(X_{e_j^{ch}} \mid X_{S_j}[m]\Big)}{ Q(X_{S_j}[m])} 
\end{equation}

\vskip 0.2in
\bibliography{sample}

\end{document}